\theoremstyle{plain}
\newtheorem{prop}{Proposition}
\title{Object-Centric Representation Learning for \\ Enhanced 3D Semantic Scene Graph Prediction}
\author{%
  KunHo Heo\footnotemark[1] \\
  Kyung Hee University\\
  \texttt{hkh7710@khu.ac.kr} \\
  \And
  GiHyun Kim\footnotemark[1] \\
  Kyung Hee University\\
  \texttt{kimh060612@khu.ac.kr} \\
  \AND
  SuYeon Kim \\
  Kyung Hee University\\
  \texttt{spoiuy3@khu.ac.kr} \\
  \And
  MyeongAh Cho\footnotemark[2] \\
  Kyung Hee University\\
  \texttt{maycho@khu.ac.kr} \\
}
\renewcommand{\thefootnote}{\fnsymbol{footnote}}
\begin{document}
\maketitle

\begingroup
\renewcommand\thefootnote{}
\footnotetext{* Equal contribution}
\footnotetext{\dag\ Corresponding author}
\endgroup

\begin{abstract}
  3D Semantic Scene Graph Prediction aims to detect objects and their semantic relationships in 3D scenes, and has emerged as a crucial technology for robotics and AR/VR applications. While previous research has addressed dataset limitations and explored various approaches including Open-Vocabulary settings, they frequently fail to optimize the representational capacity of object and relationship features, showing excessive reliance on Graph Neural Networks despite insufficient discriminative capability. In this work, we demonstrate through extensive analysis that the quality of object features plays a critical role in determining overall scene graph accuracy. To address this challenge, we design a highly discriminative object feature encoder and employ a contrastive pretraining strategy that decouples object representation learning from the scene graph prediction. This design not only enhances object classification accuracy but also yields direct improvements in relationship prediction. Notably, when plugging in our pretrained encoder into existing frameworks, we observe substantial performance improvements across all evaluation metrics. Additionally, whereas existing approaches have not fully exploited the integration of relationship information, we effectively combine both geometric and semantic features to achieve superior relationship prediction. Comprehensive experiments on the 3DSSG dataset demonstrate that our approach significantly outperforms previous state-of-the-art methods. Our code is publicly available at \href{https://github.com/VisualScienceLab-KHU/OCRL-3DSSG-Codes}{\textcolor{magenta}{\url{https://github.com/VisualScienceLab-KHU/OCRL-3DSSG-Codes}}}.
\end{abstract}
\section{Introduction}
\label{introduction}

\begin{figure}[ht]
    \centering
    \includegraphics[width=0.9\linewidth]{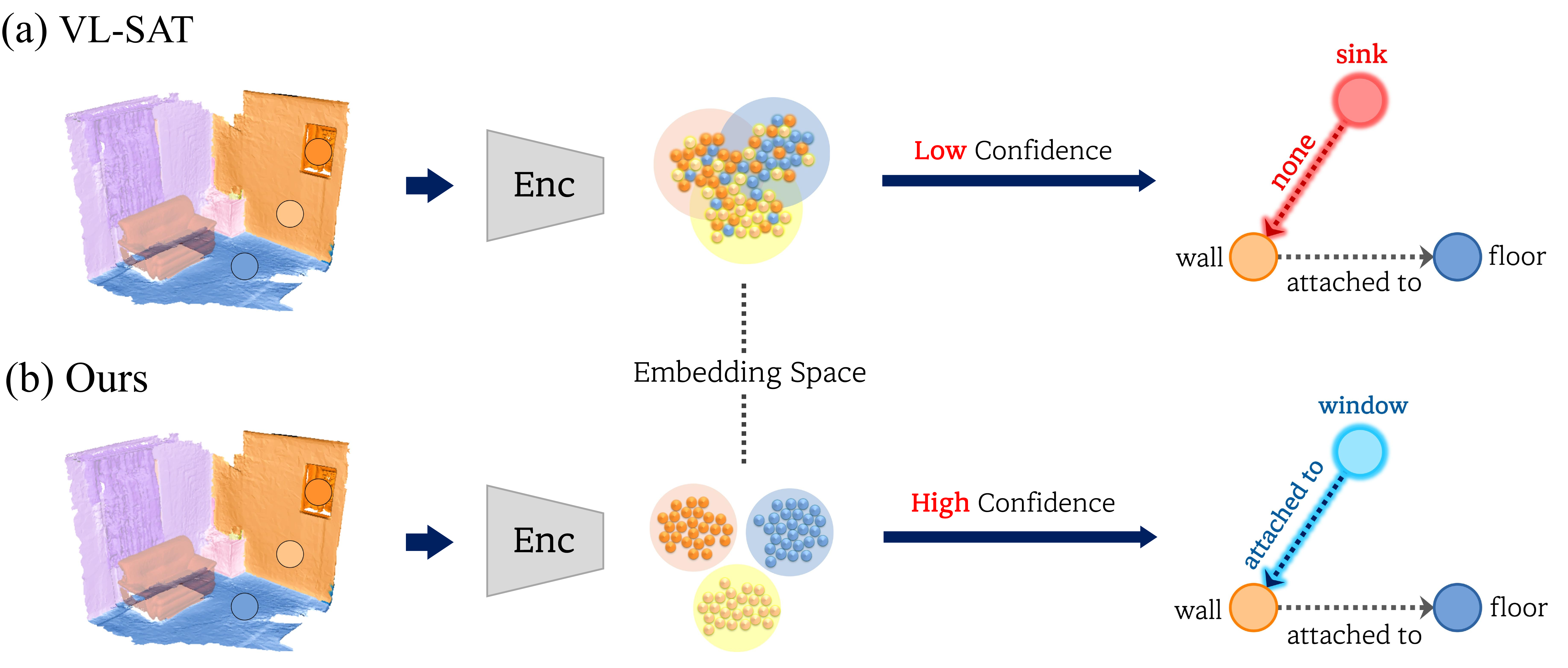}
    \caption{(a) VL-SAT \cite{wang2023vl} embeds object features non-discriminatively, leading to low-confidence predictions and frequent object misclassifications, which degrade relationship accuracy. In contrast, (b) our method embeds object features in a more discriminative manner, yielding high confidence scores and more accurate object classifications. Consequently, relationship predictions are significantly improved, resulting in a more coherent and semantically accurate scene graph.}
    \label{fig:fig1}
    \vspace{-15pt}
\end{figure}

\vspace{-9pt}

Recent research in 3D Semantic Scene Graph (3DSSG) Prediction has significantly enhanced the semantic understanding of 3D environments. By abstracting raw point cloud data into structured semantic graphs that capture objects and their interrelationships, 3DSSG facilitates critical tasks such as robot navigation \cite{gu2024conceptgraphs, yin2024sg, gomez2020hybrid}, object manipulation \cite{dhamo2021graph}, and VR/AR interactions \cite{tahara2020retargetable}. This technology has become essential for semantic-level 3D understanding in various applications \cite{sarkar2023sgaligner, agia2022taskography, nuthalapati2021lightweight, azuma_2022_CVPR, etesam20223dvqa} by reducing ambiguity and facilitating intuitive human-machine interaction.

A variety of deep learning approaches have been proposed for accurate 3DSSG prediction \cite{armeni20193d, kim20193, gay2019visual, rosinol2021kimera, wald2022learning, wu2023incremental, zhang2021sggpoint, fanfan20223d}. SGPN \cite{wald2020learning} pioneered the point cloud-based relationship prediction framework and released the 3DSSG dataset, which we utilize in this study. SGFN \cite{wu2021scenegraphfusion} introduced a feature-wise attention layer for more accurate inference and proposed an incremental update methodology. Most recently, VL-SAT \cite{wang2023vl} addressed the problem of long-tailed distribution utilizing visual and textual information. Despite these advances, we identify two critical limitations in current approaches.

\textbf{Object misclassification leads to relationship errors.} Our analysis reveals that inaccuracies in object classification often propagate to relationship prediction, even in state-of-the-art methods. This is largely due to the limited discriminative power of object features, as many existing models prioritize relational inference using Graph Neural Networks (GNNs) without first ensuring robust object representation. As shown in Fig.~\ref{fig:fig1}(a), non-discriminative object embeddings result in low-confidence predictions and frequent misclassifications, ultimately degrading relationship accuracy.

To address this, we introduce a Discriminative Object Feature Encoder, pretrained separately to avoid entanglement with scene graph objectives. This encoder yields semantically rich and well-separated object embeddings, as shown in Fig.~\ref{fig:fig1}(b), serving as a reliable foundation for downstream graph construction. By leveraging these highly discriminative object features, our relationship feature encoder can effectively fuse semantic object identity with geometric cues, leading to more accurate relationship reasoning. This combined approach ensures that relationship prediction benefits from improved object classification that inherently provides richer semantic context, overcoming limitations of methods that rely primarily on geometric relationships.

\textbf{Lack of elaborating relationship information.} Prior works \cite{woo2018linknet, tang2019learning, yang2018graph, sharifzadeh2021classification, li2017vip, liao2019natural, yu2017visual, zhou2019visual} often fail to effectively integrate object information when constructing relationship features for prediction. Methods like SGFN \cite{wu2021scenegraphfusion}, VL-SAT \cite{wang2023vl}, and 3D-VLAP \cite{wang2024weakly} rely solely on geometric relationship information between objects as edge features, neglecting object semantic characteristics. Conversely, approaches like Zhang et al. \cite{zhang2021knowledge} and SGPN \cite{wald2020learning} use PointNet-based features from scene-level point clouds containing both objects, introducing excessive background information that can degrade prediction accuracy.

We propose a novel Relationship Feature Encoder that jointly embeds object pair representations with explicit geometric relationship information. Given the inherent disparity in dimensionality and information content, we introduce a Local Spatial Enhancement (LSE) module that applies targeted regularization. This encourages preserving geometric metadata while maintaining the integrity of the feature of the object, effectively mitigating the representational imbalance. Additionally, we design GNN with Bidirectional Edge Gating (BEG), enabling separate encoding of subject and object roles based on edge directionality. This directional decomposition explicitly captures asymmetric relational semantics, which are often overlooked in prior symmetric modeling. We further incorporate Global Spatial Enhancement (GSE) that contextualizes object relationships by integrating holistic geometric placement information, allowing the model to capture global spatial dependencies for accurate relationship prediction.

Our main contributions are as follows: (1) We identify the overlooked importance of object representation in prior 3DSSG methods and propose a \textbf{Discriminative Object Feature Encoder}, pretrained independently to serve as a robust semantic foundation—improving not only our model but also enhancing performance when integrated into existing frameworks; (2) A novel \textbf{Relationship Feature Encoder} that combines object pair embeddings with geometric relationship information, enhanced by LSE; (3) A \textbf{Bidirectional Edge Gating} mechanism that explicitly models subject-object asymmetry, along with a \textbf{Global Spatial Enhancement} to incorporate holistic spatial context; We validate our approach through extensive experiments, achieving significant performance improvements over state-of-the-art 3DSSG methods.
\section{Observations}
\label{observations}

\vspace{-5pt}

\newcommand{\cmark}{\textcolor{green}{\ding{51}}}
\newcommand{\xmark}{\textcolor{red}{\ding{55}}}

A first inspection of validation scenes reveals that relation mistakes rarely occur in isolation: when the model assigns an incorrect label to either the subject or the object, the accompanying predicate, which describes the relationship between them, is also likely to be wrong. To quantify this phenomenon, we categorize all predictions into three mutually-exclusive groups according to the correctness of the object labels: (1) \emph{Correct Object / Correct Subject}, (2) \emph{Wrong Object / Correct Subject \;or\; Correct Object / Wrong Subject}, and (3) \emph{Wrong Object / Wrong Subject}.

\begin{wraptable}{r}{0.42\textwidth}
\vspace{-12pt}
\centering
\captionsetup{skip=4pt}
\small
\setlength{\tabcolsep}{2.4pt}
\begin{tabular}{l ccc}
\toprule
\rowcolor[gray]{0.9}
\textbf{Model} & 
\begin{tabular}{c} \textbf{Obj. \cmark} \\ \textbf{Sub. \cmark} \end{tabular} & 
\begin{tabular}{c} \textbf{Obj. \cmark /\xmark} \\ \textbf{Sub. \xmark /\cmark} \end{tabular} & 
\begin{tabular}{c} \textbf{Obj. \xmark} \\ \textbf{Sub. \xmark} \end{tabular} \\
\midrule
SGPN \cite{wald2020learning} & 8\% & 12\% & 18\% \\
SGFN \cite{wu2021scenegraphfusion} & 8\% & 12\% & 20\% \\
VL-SAT \cite{wang2023vl} & 8\% & 13\% & 19\% \\
\bottomrule
\end{tabular}
\caption{Distribution of \textbf{predicate errors} according to object and subject classification status (\cmark\ : correct classification, \xmark\ : misclassification).}
\label{tab:obj_pred_corr}
\vspace{-0.5cm} 
\end{wraptable}

Table~\ref{tab:obj_pred_corr} reports the distribution of these groups for the previous models (SGPN~\cite{wald2020learning}, SGFN~\cite{wu2021scenegraphfusion}, and VL-SAT~\cite{wang2023vl}). Only \SI{8}{\percent} of VL-SAT's predicate errors occur when both objects and subjects are correctly recognized, whereas the misclassification ratio increases significantly in the remaining two categories. Other works like SGPN and SGFN also follow this trend, suggesting that improving object classification performance is a promising approach for reducing predicate errors as well.

\begin{wrapfigure}{r}{0.5\textwidth}
  \vspace{-0.5cm}
  \centering
  \hspace{-0.3cm}
  \includegraphics[width=0.5\textwidth,trim={0 8pt 0 0},clip]{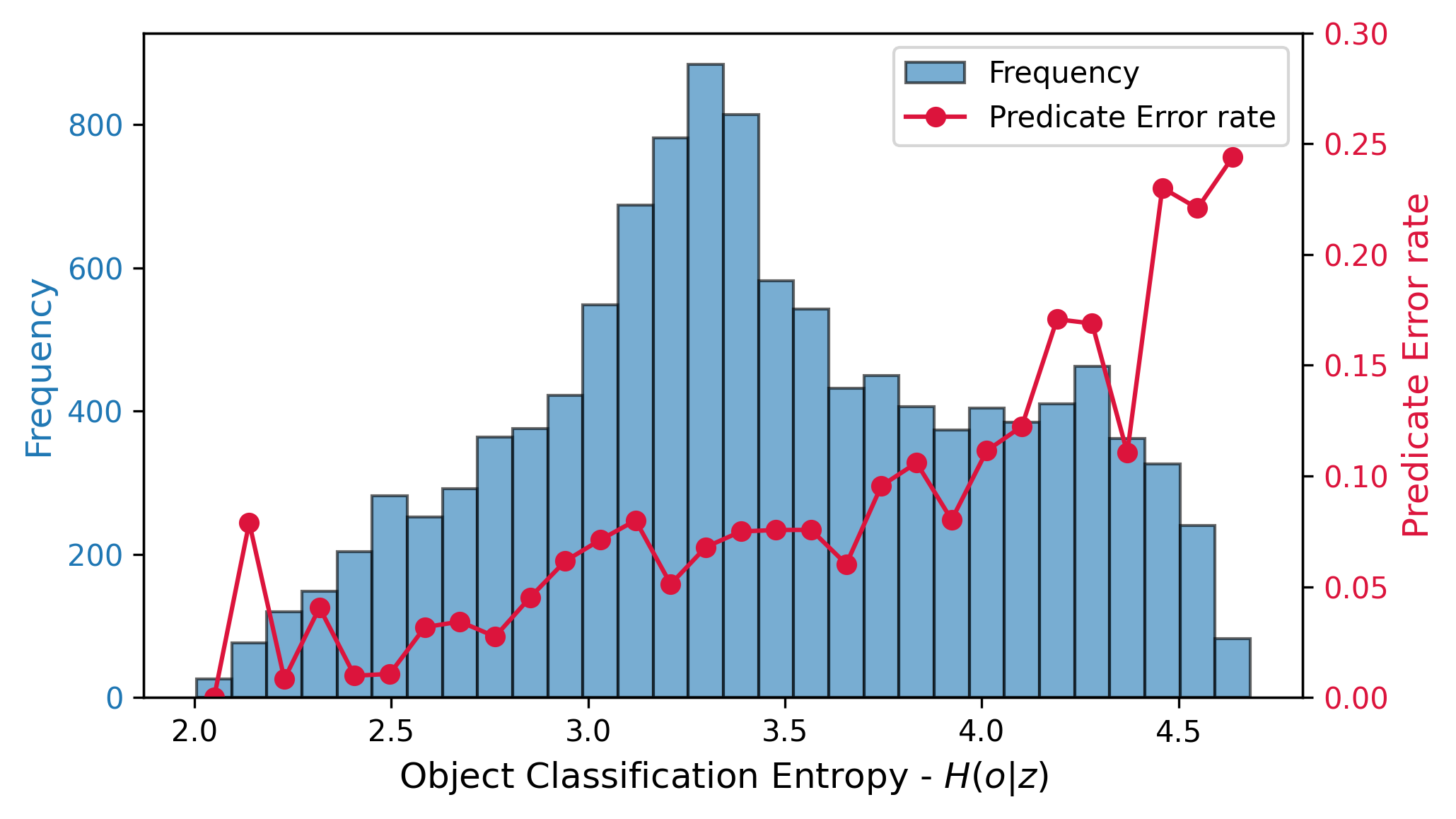}
  \caption{Histogram of object classification entropy and predicate prediction error rate, illustrating that higher entropy is associated with increased predicate errors under comparable relationship frequencies.}
  \label{fig:obs_entropy_predicates}
  \vspace{-0.5cm}
\end{wrapfigure}

Furthermore, as shown in Fig.~\ref{fig:obs_entropy_predicates}, the predicate prediction error increases almost monotonically with the entropy $H(o|\mathbf{z})$ of the object classification, where $o$ is the object label predicted from the feature vector $\mathbf{z}$. This trend holds even when both objects are correctly predicted at Top-1 accuracy, and is consistent across entropy bins with similar relationship frequencies. This result guides our work toward designing a \textit{more discriminative and accurate object feature space} for improved predicate estimation. Based on these observations, we hypothesize in the following section that predicate classification errors are strongly correlated with incorrect or uncertain object label predictions (please refer to the Appendix Section B for more details).

\textbf{Probabilistic formulation.} Let $o_i,o_j \in \mathcal{O}$ denote the ground-truth semantic labels of two objects, $e_{ij} \in \mathcal{E}$ the predicate connecting them, and $\mathbf{z}_i$, $\mathbf{z}_j$ the embedding vectors produced by an object encoder $f_{\theta_p}$.
The probability of correctly classifying an object given its embedding $\mathbf{z}_i$ is
$P(o_i \mid \mathbf{z}_i)$, while predicate classification is governed by the conditional
distribution $P(e_{ij}\mid \mathbf{z}_i,\mathbf{z}_j)$.  
When the predicate head relies—either explicitly or implicitly—on object semantics and confidence of estimation,
we can approximate $P(e_{ij}\mid \mathbf{z}_i,\mathbf{z}_j) \;\approx\; P(e_{ij}\mid o_i,o_j)$, which yields the factorization 
\begin{equation}
  P(e_{ij}\mid \mathbf{z}_i,\mathbf{z}_j) =
  \sum_{o'_i,o'_j\in\mathcal{O}} 
  P(e_{ij}\mid o'_i,o'_j)\;
  P(o'_i\mid \mathbf{z}_i)\;
  P(o'_j\mid \mathbf{z}_j).
  \label{eq:sum_over_objects}
\end{equation} 
Eq.~\eqref{eq:sum_over_objects} makes explicit that \emph{sharper} object posteriors
$P(o\mid \mathbf{z})$—i.e., more discriminative embeddings—yield lower-entropy mixture and
thus higher confidence in predicate prediction.
In contrast, ambiguous object embeddings result in broader $P(o\mid \mathbf{z})$ distributions, diluting the contribution of correct object labels and increasing the likelihood of predicate errors. Similarly, prior studies in vision tasks—such as~\cite{marino2016more, fang2017object, deng2014large}—have demonstrated that incorporating prior object knowledge aids relation prediction. Notably, \cite{chen2019knowledge} leveraged the statistical co-occurrence of objects and explicitly regularized the semantic relationship prediction to address the uneven distribution issue. Compared to other works, we elaborated these assumptions with discriminative object features. For a more detailed comparison and analysis of our method, please refer to Section B of the Appendix.

Motivated by this analysis, we propose in Section~\ref{object_feat}
a contrastively pretrained encoder that maximizes the alignment between object embeddings and their class semantics. This sharpens the $P(o\mid \mathbf{z})$ and, through Eq.~\eqref{eq:sum_over_objects}, improves both accuracy of predicate and triplet inference.  
The empirical results in Tables~\ref{tab:3dssg_comparison}, ~\ref{tab:graph_constraints} are fully consistent with this observations, providing evidence that improving object representation leads to more accurate and semantically consistent relation reasoning.
\vspace{-3pt}

\section{Proposed Methods}
\label{methods}

\vspace{-5pt}

Our goal is to predict a 3D semantic scene graph represented as a directed graph $\mathcal{G}=\{ \mathcal{V}, \mathcal{E} \}$ where $\mathcal{V}$ and $\mathcal{E}$ denote the sets of nodes and edges, respectively. The input consists of a 3D point cloud $\mathbf{P} \in \mathbb{R}^{N \times 3}$, containing $N$ points, a set of class-agnostic instance masks $\mathcal{M}=\{ M_k \}_{k=1}^K$, which associate the point cloud with $K$ semantic instances. Also, 3DSSG takes the scan data of 3RScan ~\cite{wald2019rio} which contains RGB-D sequence when constructing the 3D point cloud data. Here, we can extract a collection of RGB images $\mathcal{I}_k = \{ I_1, I_2, ..., I_{n_{k}} \}$ for each object instance in $\hat{o}_k \in \mathcal{V}$ following the procedure in VL-SAT \cite{wang2023vl}. Each object instance $\hat{o}_i$ represents the nodes in scene graph with ground-truth label $o_i \in \mathcal{O}$, while each edge $e_{ij} \in \mathcal{E}$ encodes the predicates in a triplet $\langle$ \textit{subject, predicate, object} $\rangle$, where the head node $\hat{o}_i$ serves as the \textit{subject} and the tail node $\hat{o}_j$ as the \textit{object}. 
Each $\hat{o}_i$ corresponds to one of $N_{obj}$ semantic object classes, while each edge $e_{ij}$ can contain multiple predicate labels from $N_{pred}$ semantic relation classes such as ‘\texttt{standing on}’, ‘\texttt{hanging in}’, and others.

\vspace{-0.3cm}

\subsection{Object Feature Learning}
\label{object_feat}

\vspace{-5pt}

\begin{figure}[t]
  \centering
  \includegraphics[width=\textwidth]{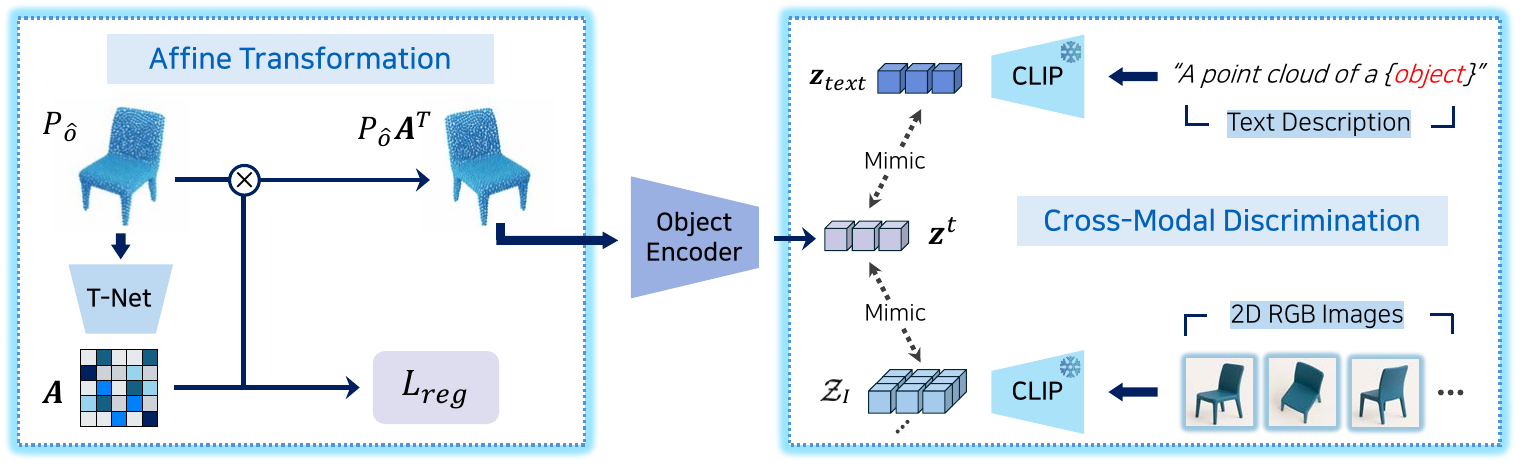}
  \caption{\textbf{Architecture of our Object Feature Encoder.} The encoder extracts object embedding $\mathbf{z}^t$ from point clouds via affine transformation, aligned with CLIP features: $\mathbf{z}_{\text{text}}$ from text description and $\mathcal{Z}_I$, a set of image features from multiple 2D RGB images.}
  \label{fig:object_feat}
  \vspace{-0.5cm}
\end{figure}

As discussed in Section~\ref{observations}, learning discriminative object representations is crucial for accurate 3D semantic scene graph prediction. To this end, we pretrain the object encoder to provide the graph prediction model with robust semantic priors. Our object feature encoder (Fig.~\ref{fig:object_feat}) employs a contrastive pretraining framework that leverages the correspondences between 3D object instances and their associated 2D image views, as well as between 3D object instances and their textual descriptions to enhance semantic expressiveness while preserving geometric invariance. 

\textbf{Regularization for affine invariance.} To ensure robust 3D representations, our encoder must be invariant to affine transformations. As shown in Fig.~\ref{fig:object_feat}, we adopt the T-Net architecture from PointNet \cite{qi2017pointnet}, which predicts an affine transformation matrix $\mathbf{A} = \mathcal{T}(\mathbf{P}_{\hat{o}})$ for each object point cloud $\mathbf{P}_{\hat{o}}$. After applying this transformation, the object-level feature is extracted as $\mathbf{z}^t = f_{\theta_p}(\mathbf{P}_{\hat{o}} \mathbf{A}^T)$. To enforce orthogonality of the transformation matrix, we apply the following regularization term: $\mathcal{L}_{reg} = || I - \mathbf{AA}^T ||^2_F$

\textbf{Discriminative object features with image/text modals.} As noted in Section \ref{observations}, our objective is to pre-train an encoder that extracts discriminative object features, thereby enabling more confident object classification. To this end, we employ a contrastive pre-training scheme inspired by CrossPoint \cite{afham2022crosspoint} and $\text{CLIP}^2$ \cite{zeng2023clip2}, leveraging the text descriptions and 2D RGB images available for each 3D object instance. For an instance $\hat{o}_i$ with RGB views $\mathcal{I}_{i}=\{ I_n \}_{n=1}^{n_{i}}$, we cast the task as a contrastive learning problem with multiple positive samples per anchor. Features from both modalities are obtained with a pretrained CLIP \cite{radford2021learning} \textit{ViT-B/32} encoder $f_{\theta_c}(\cdot)$. Specifically, each image is encoded as $\mathbf{z}_{\text{image}}^i \in \mathcal{Z}_I^i = \{ f_{\theta_c}(I) : I \in \mathcal{I}_i \}$, while the textual description $T_{\hat{o}_i}$ is encoded as $\textbf{z}_{\text{text}}^{i}=f_{\theta_c}(T_{\hat{o}_i})$. The text prompt follows the template \textit{“A point cloud of \{\textbf{object}\}.”}

Our objective is to maximize the similarity between the object embedding vector $\mathbf{z}^t$ and its corresponding image and text features $\mathbf{z}^{i}_{\text{image}}, \textbf{z}_{\text{text}}^{i}$, while minimizing similarity to other embeddings in the mini-batch. Following previous works \cite{yeh2022decoupled, koch2024lang3dsg, zeng2023clip2}, we adopt a supervised contrastive framework to efficiently utilize label data, assuming that objects with the same label share semantic characteristics across their point clouds and associated images. However, unlike text prompts which provide a single representation per object, multiple images can correspond to a single object instance. To handle this asymmetry, we separate the contrastive losses for visual and textual modality. Let $\mathcal{D}_B = \{ (\mathbf{z}^t_i, o_i) \}_{i=1}^{B}$ denote the batch of $B$ point cloud features and their corresponding ground-truth labels. For each index $i \in \mathcal{B} = \{1, ..., B \}$, we define the set of positive indices as $\mathcal{P}(i) = \{ p \in \mathcal{B} : o_p = o_i \}$, and the set of negatives as $\mathcal{N}(i) = \mathcal{B} / \mathcal{P}(i)$. The visual contrastive loss is defined as:
\begin{equation}
    \mathcal{L}_{i}^{visual} = \frac{1}{|\mathcal{P}(i)|} \sum_{p \in \mathcal{P}(i)} \sum_{\mathbf{z}_{+}\in \mathcal{Z}_I^p} -\log \frac{\text{exp}(s(\mathbf{z}^t_i, \mathbf{z}_{+})/\ \tau)}{\sum_{r \in \mathcal{N}(i)} \sum_{\mathbf{z}_{-}\in \mathcal{Z}_I^r} \text{exp}(s(\mathbf{z}^t_i, \mathbf{z}_{-}) /\ \tau)} 
\end{equation}

where $s(\cdot, \cdot)$ denotes the cosine similarity, and the temperature parameter $\tau$ is set to $0.07$ during pretraining. Similarly, the text contrastive loss is defined as:
\begin{equation}
\mathcal{L}^{text}_{i} = -\log \frac{e^{s(\mathbf{z}^t_i, \mathbf{z}^i_{\text{text}})/\ \tau}}{\sum_{r \in \mathcal{N}(i)} e^{s(\mathbf{z}^t_i, \mathbf{z}^r_{\text{text}})/\ \tau}}
\end{equation}

We compute the total cross-modal loss by averaging the individual losses across the batch:
\begin{equation}
    \mathcal{L}_{cross} = \frac{1}{B} \left(  \sum_{i \in I} \mathcal{L}_i^{visual} + \mathcal{L}_i^{text} \right)    
\end{equation}

Unlike prior works~\cite{afham2022crosspoint, zeng2023clip2}, we do not include the positive samples in the denominator, thereby promoting a more robust and discriminative feature space. This design promotes the pretrained encoder to extract more discriminative features, which reduces conditional uncertainty $H(o \mid \mathbf{z})$ and leads to lower predicate prediction errors (please refer to the Appendix Section B for more details).

\textbf{Loss functions of object feature learning.} The training objective of the Object Feature Learning (OFL) is defined as $\mathcal{L}_{pretrain} = \lambda_{reg}\mathcal{L}_{reg} + \lambda_{cross}\mathcal{L}_{cross}$, where $\lambda_{cross}=1$ and $\lambda_{reg}=0.001$ are empirically set to balance the contributions of each term. This module is trained independently during a pretraining stage, after which the resulting encoder is employed as a fixed feature extractor in our scene graph prediction pipeline. This pretraining allows the object encoder to capture more discriminative and semantically meaningful representations.

\begin{figure}[t]
  \centering
  \includegraphics[width=\textwidth]{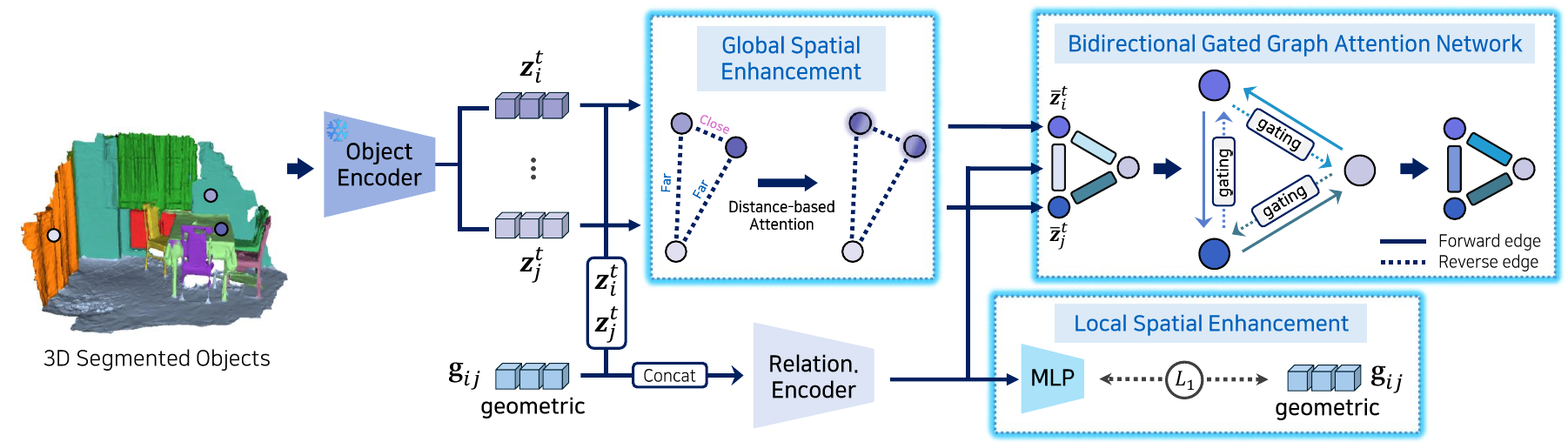}
  \caption{\textbf{Architecture overview.} Object embeddings \{$\mathbf{z}_i^t$, \dots , $\mathbf{z}_j^t$\} are refined via Global Spatial Enhancement to incorporate global spatial context based on inter-object distances, producing enhanced features \{$\bar{\mathbf{z}}_i^t$, \dots , $\bar{\mathbf{z}}_j^t$\}. Simultaneously, the Local Spatial Enhancement locally preserves geometric relationships between object pairs. The Bidirectional Gated Graph Attention Network then selectively modulates the information of reverse edges, effectively capturing asymmetric relationships between objects.}
  \label{fig:overall}
  \vspace{-0.4cm}
\end{figure}

\subsection{Relationship Feature Learning}
\label{rel_encoder}

\vspace{-5pt}

After the pretraining stage, we train the relationship feature encoder and the GNNs for scene graph prediction. Unlike prior approaches, our relationship feature encoder integrates both object-level semantic features and their geometric relationships. Following the design of geometric descriptors used in prior works~\cite{wang2023vl, wu2021scenegraphfusion}, we adopt their formulation to represent the spatial configuration between object pairs. The geometric descriptor between objects is defined as:
\begin{equation}
\mathbf{g}_{ij} = \texttt{CAT}\left(  
    \boldsymbol{\mu}_{i}-\boldsymbol{\mu}_{j},  
    \boldsymbol{\sigma}_{i}-\boldsymbol{\sigma}_{j},
    \mathbf{b}_{i}-\mathbf{b}_{j},
    \log \frac{v_i}{v_j},
    \log \frac{l_i}{l_j}
\right) \in \mathbb{R}^{11}
\end{equation}
where $\boldsymbol{\mu}$ and $\boldsymbol{\sigma}$ denote the mean and standard deviation of 3D point coordinates, $\mathbf{b}=\left( b_x, b_y, b_z \right)$ is the size of bounding box, and $v$ and $l$ represent the volume and maximum side length of the bounding box, respectively. Since many relationships are inherently related to spatial configurations, combining geometric descriptors with object features allows our encoder to capture both semantic and spatial cues. Our relationship feature encoder takes three inputs: the features of the subject and object (as defined in Section~\ref{object_feat}) and the geometric descriptor $\mathbf{g}_{ij}$. Given subject and object features $\mathbf{z}^t_i$ and $\mathbf{z}^t_j$, the initial edge feature is computed as $\mathbf{z}^e_{ij} = f_{\theta_r} \left( \texttt{CAT} \left( g_{\theta_{obj}}(\mathbf{z}^t_i), g_{\theta_{obj}}(\mathbf{z}^t_j), g_{\theta_{geo}}(\mathbf{g}_{ij}) \right) \right)$, where $g_{\theta_{obj}}$ and $g_{\theta_{geo}}$ are MLP-based projection networks for object features and geometric descriptors, respectively. The function $f_{\theta_r}$ is a relation feature extractor implemented as a lightweight MLP followed by a 1D convolutional layer with kernel size 5. The operator $\texttt{CAT}(\cdot)$ denotes channel-wise feature concatenation.

To address the potential information imbalance between high-dimensional object embeddings and the relatively simple geometric descriptor, we introduce an auxiliary task, termed Local Spatial Enhancement (LSE), for the relationship feature extractor. This task employs an additional MLP-based projection head that aims to reconstruct the original geometric descriptor from the learned relationship feature. The objective is to minimize the $L_1$ loss between the predicted and original geometric descriptors. This auxiliary supervision encourages the relationship representation to retain geometric information, effectively compensating for the dimensional disparity between the object and geometric inputs.

\vspace{-5pt}

\subsection{Graph Neural Networks}
\label{gnn_backend}

\vspace{-5pt}

\textbf{Global Spatial Enhancement.} In 3D scenes, spatial proximity and orientation between objects play a crucial role in determining their potential relationships. To incorporate this spatial awareness into our model, we adopted a Global Spatial Enhancement (GSE) mechanism. Given center coordinates $\boldsymbol{\mu}_i, \boldsymbol{\mu}_j$ of each node, we compute the Euclidean distance $d_{ij} = \lVert \boldsymbol{\mu}_i - \boldsymbol{\mu}_j \rVert_2$ between object pairs, explicitly capturing spatial relationships. With distance $d_{ij}$ between object instance pair $(i, j)$, we can represent the distances of all instance pairs as a matrix $D = [ d_{ij} ]_{i, j = 1,…,N}$. This distance matrix is multiplied by learnable parameters $W^{(h)} \in \mathbb{R}^{N \times N}$ to generate distance weights $w^{(h)}_{ij} = W^{(h)}D$. The computed distance weights are then integrated into the standard attention mechanism as follows: 
\begin{equation}
\alpha_{ij}^{(h)}=
\operatorname{softmax}_{j}\Bigl(
\frac{\mathbf{q}_i^{(h)\top}\mathbf{k}_j^{(h)}} {\sqrt{d_k}}\;+\;\mathbf{w}_{ij}^{(h)}
\Bigr)
\end{equation}
This formulation naturally reorganizes the spatial cues via $d_{ij}$ of objects within the scene, emphasizing geometrically meaningful relationships while effectively filtering object pairs with low relevance. Attention between spatially proximate objects can be strengthened, or object pairs with specific distance patterns can be emphasized, enabling the model to better understand the structural characteristics of the scene.

\textbf{Bidirectional Edge Gating.} Real-world object relationships exhibit inherent directionality, with distinct semantic roles assigned to subjects and objects. To capture this asymmetry, we propose a Bidirectional Edge Gating (BEG) mechanism that regulates information flow between directed edges. Throughout this process, we utilize node features $\bar{\mathbf{z}}$ that have been refined by the GSE module. To update node features, we separately aggregate outgoing and incoming edge features based on their respective roles:
\begin{equation}
\mathbf{z}_i^{\text{sub}} = \frac{1}{|E_i^{\text{sub}}|} \sum_{(i, j) \in E_i^{\text{sub}}} \mathbf{z}^e_{ij}, \quad
\mathbf{z}_i^{\text{obj}} = \frac{1}{|E_i^{\text{obj}}|} \sum_{(j, i) \in E_i^{\text{obj}}} \mathbf{z}^e_{ji}
\end{equation}
\vspace{-5pt}

where $E_i^{\text{sub}}$ denotes the set of edges for which node $i$ serves as the subject, $E_i^{\text{obj}}$ denotes those where it serves as the object, and $\mathbf{z}^e_{ij}$ represents the edge feature from node $i$ to node $j$. Based on these updated subject and object feature of relation in graph, BEG updates node and directed edge feature $(i, j)$ as: 
\begin{equation}
\begin{aligned}
\bar{\mathbf{z}}_i &\leftarrow 
\texttt{LN}\Big(
    \texttt{MLP}\big(
        \bar{\mathbf{z}}_i,\ 
        \sigma\!\big(
            W_{\mathrm{dir}} \ \texttt{CAT}(\mathbf{z}^{\mathrm{sub}}_i,\ \mathbf{z}^{\mathrm{obj}}_i)
        \big)
    \big)
\Big) \\
\mathbf{z}^{e}_{ij} &\leftarrow 
\texttt{MLP}\Big(
    \texttt{CAT}(
        \bar{\mathbf{z}}_i,\ 
        \mathbf{z}^{e}_{ij},\ 
        \beta_{ij}\mathbf{z}^{e}_{ji},\ 
        \bar{\mathbf{z}}_j
    )
\Big)
\end{aligned}
\end{equation}
\vspace{-10pt}

where $W_{\text{dir}}$ is learnable parameter, $\sigma(\cdot)$ is ReLU activation function, and \texttt{LN}, \texttt{MLP} are LayerNorm and MLP layer respectively. Additionally, while updating edge features in GNN, BEG controls the influence of reverse edge $(j, i)$ through a gate scalar $\beta_{ij} = \text{gate}(\mathbf{z}_{ij}^{e})$. This bidirectional design preserves the semantic distinction between subject and object roles while enabling controlled information exchange across directed edges, thereby enhancing the model’s ability to capture asymmetric relationships in 3D scene graphs.

\vspace{-5pt}

\subsection{Loss Functions}
\label{loss_fn_gnn}

\vspace{-5pt}

The training objective of our network is defined as $\mathcal{L}_{sg} = \lambda_{obj}\mathcal{L}_{obj} + \lambda_{rel}\mathcal{L}_{rel} + \lambda_{lse}\mathcal{L}_{lse}$, where $\mathcal{L}_{obj}$ is the object classification loss, computed using cross-entropy over object categories. The relationship classification loss, denoted as $\mathcal{L}_{rel}$, is calculated using binary cross-entropy. The third term, $\mathcal{L}_{lse}$, is the $L_1$ loss between the predicted and original geometric descriptors in the LSE task. The coefficients $\lambda_{obj}$, $\lambda_{rel}$, and $\lambda_{lse}$ control the relative contributions of each loss term.
\vspace{-0.3cm}
\section{Experiments}
\label{experiments}

\vspace{-5pt}

\newcommand{\redarrow}{\textcolor{red}{$\bm{\rightarrow}$}}
\newcommand{\reddashedarrow}{\textcolor{red}{$\bm{\dashrightarrow}$}}
\newcommand{\greenarrow}{\textcolor{green}{$\bm{\rightarrow}$}}

\begin{figure}[t]
  \centering
  \includegraphics[width=\textwidth]{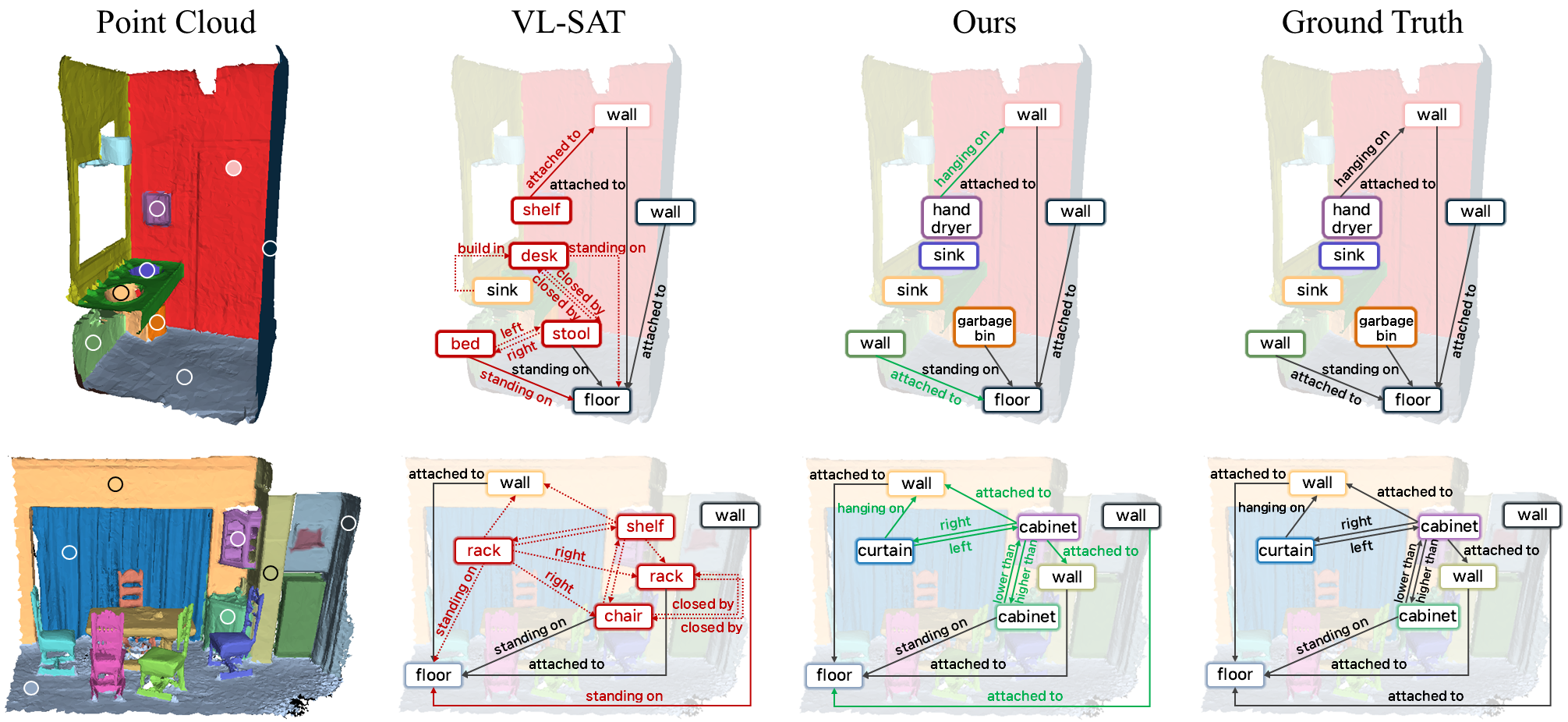}
  \caption{\textbf{3D scene graph visualizations.} \greenarrow\ indicates true positive relations that are correctly predicted. \redarrow\ denotes false positives, where the model predicts an incorrect predicate for an existing relation. \reddashedarrow\ represents either false negatives—missed ground-truth relations—or hallucinated relations that do not exist in the ground truth.}
  \label{fig:qualitative_scene_graph}
  \vspace{-0.4cm}
\end{figure}

\textbf{Datasets and task descriptions.} We evaluate our approach on the 3DSSG dataset \cite{wald2020learning}, a semantically enriched extension of 3RScan \cite{wald2019rio} designed for 3D semantic scene graph prediction~\footnote{We used the 3DSSG and 3RScan dataset \cite{wald2020learning} under the permission of its creators and authors. We contacted the authors by email and google form.}. This dataset consists of 1,553 real-world indoor scenes, annotated with 160 object categories and 26 predicate types, covering a wide range of household environments. We follow the standard train/validation split defined in the original benchmark. For details of the evaluation metrics, please refer to Section D of the Appendix.

\vspace{-0.3cm}

\subsection{Comparison with Other Works}
\label{comparison}

\vspace{-6pt}

We compare our method with the state-of-the-art approach VL-SAT~\cite{wang2023vl}, as well as several existing baselines, including SGPN~\cite{wald2020learning} and SGFN~\cite{wu2021scenegraphfusion}, using their publicly available implementations or reported results. Our evaluation includes both quantitative performance metrics and qualitative analysis, including t-SNE visualizations of object feature spaces and 3D scene graph visualizations. Additional experimental results are provided in Section D of the Appendix.

\begin{wraptable}{r}{0.54\textwidth}
  \centering
  \setlength{\tabcolsep}{2.5pt} 
  \footnotesize  
  \renewcommand{\arraystretch}{1.0} 
  \begin{tabular}{l cc cc cc}
    \toprule
    \multirow{2}{*}{Model} &
    \multicolumn{2}{c}{\textbf{Object}} &
    \multicolumn{2}{c}{\textbf{Predicate}} &
    \multicolumn{2}{c}{\textbf{Triplet}} \\
    \cmidrule(lr){2-3}\cmidrule(lr){4-5}\cmidrule(lr){6-7}
    & R@1 & R@5 & R@1 & R@3 & R@50 & R@100 \\
    \midrule
    SGPN \cite{wald2020learning}         & 49.46 & 73.99 & 86.92 & 94.76 & 85.38 & 88.59 \\
    SGFN \cite{wu2021scenegraphfusion}         & 53.36 & 76.88 & 89.00 & 97.71 & 88.59 & 91.14 \\
    VL-SAT \cite{wang2023vl}       & 55.93 & 78.06 & 89.81 & 98.46 & 89.35 & 92.20 \\
    \midrule
    Ours         & \textbf{59.53} & \textbf{81.20} & \textbf{91.27} & \textbf{98.48} & \textbf{91.40} & \textbf{93.80} \\
    \bottomrule
  \end{tabular}
  \caption{Quantitative results (\%) on 3DSSG validation set. The \textbf{bold} denotes the best performance.}
  \label{tab:3dssg_comparison}
  \vspace{-10pt}  
\end{wraptable}

\textbf{Quantitative comparison.} Tables~\ref{tab:3dssg_comparison} and~\ref{tab:graph_constraints} show that our model achieves a new state-of-the-art performance on the 3DSSG benchmark across all evaluation metrics. The most significant improvements are observed in object classification. As reported in Table~\ref{tab:3dssg_comparison}, our method improves object classification accuracy by 2--4\% compared to VL-SAT~\cite{wang2023vl}. This enhanced object representation contributes directly to better relationship classification, supporting our main assumption. Our method also consistently outperforms baselines in both SGCls and PredCls tasks, as reported in Table~\ref{tab:graph_constraints}. Regardless of whether graph constraints are applied, we observe performance gains of 1--4\% in both settings. These consistent improvements demonstrate not only the effectiveness of our approach over existing methods, but also provide empirical support for our assumption: enhancing object discrimination leads to more precise relationship prediction. The improvements in PredCls, where object labels are given and only relational reasoning is required, further substantiate this effect.  

\begin{table*}[h]
  \centering
  \setlength{\tabcolsep}{1.7pt} 
  \small
  \renewcommand{\arraystretch}{1.1} 
  \makebox[\textwidth][c]{ 
  \begin{tabular}{l ccc ccc ccc ccc}
    \toprule
    \multirow{2}{*}{Model} &
    \multicolumn{3}{c}{SGCls (w/ GC)} & \multicolumn{3}{c}{PredCls (w/ GC)}
    & \multicolumn{3}{c}{SGCls (w/o GC)} & \multicolumn{3}{c}{PredCls (w/o GC)} \\
    \cmidrule(lr){2-4}\cmidrule(lr){5-7}\cmidrule(lr){8-10}\cmidrule(lr){11-13}
    & R@20 & R@50 & R@100 & R@20 & R@50 & R@100 & R@20 & R@50 & R@100 & R@20 & R@50 & R@100 \\
    \midrule
    SGPN \cite{wald2020learning}           & 27.0 & 28.8 & 29.0 & 51.9 & 58.0 & 58.5 & 28.2 & 32.6 & 35.3 & 54.5 & 70.1 & 82.4 \\
    Zhang et al. \cite{zhang2021knowledge}   & 28.5 & 30.0 & 30.1 & 59.3 & 65.0 & 65.3 & 29.8 & 34.3 & 37.0 & 62.2 & 78.4 & 88.3 \\
    SGFN \cite{wu2021scenegraphfusion}           & 29.5 & 31.2 & 31.2 & 65.9 & 78.8 & 79.6 & 31.9 & 39.3 & 45.0 & 68.9 & 82.8 & 91.2 \\
    VL-SAT \cite{wang2023vl}         & 32.0 & 33.5 & 33.7 & 67.8 & 79.9 & 80.8 & 33.8 & 41.3 & 47.0 & 70.5 & 85.0 & 92.5 \\
    \midrule
    Ours           & \textbf{36.1} & \textbf{37.7} & \textbf{37.8} & \textbf{70.2} & \textbf{82.0} & \textbf{82.6} & \textbf{38.1} & \textbf{46.1} & \textbf{52.5} & \textbf{73.3} & \textbf{87.8} & \textbf{94.6} \\
    \bottomrule
  \end{tabular}
  }
  \caption{Quantitative results (\%) of the SGCls and PredCls tasks, with and without graph constraints.}
  \label{tab:graph_constraints}
  \vspace{10pt}
\end{table*}

\textbf{Qualitative comparison.}
Fig.~\ref{fig:qualitative_scene_graph} presents qualitative comparisons of predicted scene graphs between our method and VL-SAT \cite{wang2023vl}. VL-SAT frequently misclassifies visually similar but semantically distinct objects such as \textit{cabinet} and \textit{chair}, or \textit{stool} and \textit{garbage bin}, which leads to erroneous relationship predictions. These misclassifications result in hallucinated relationships that do not exist in the ground truth—for instance, predicting $\langle$ \textit{desk}, \textit{standing on}, \textit{floor} $\rangle$ or a spurious relation between \textit{shelf} and \textit{cabinet}. While such relationships might appear plausible given the visual similarity, they ultimately stem from incorrect object recognition, which propagates errors to the relational reasoning module. In contrast, our method correctly identifies object categories, thereby facilitating accurate and consistent relationship prediction. These examples highlight the critical role of object classification in downstream relation inference and reinforce the importance of robust object-level representations for reliable 3D scene graph construction.

\begin{wrapfigure}{r}{0.6\textwidth}
  \vspace{-0.5cm}
  \hspace{-0.4cm}
  \centering
  \includegraphics[width=\linewidth]{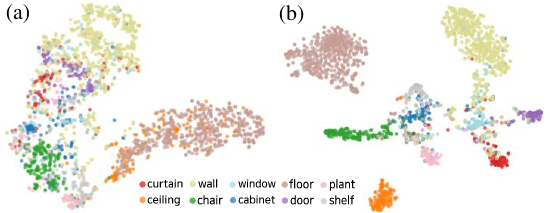}
  \caption{The t-SNE visualization of the object latent space of the VL-SAT \cite{wang2023vl} (a) and our model (b).}
  \label{fig:qualitative_embeddings_tsne}
  \vspace{-0.2cm}
\end{wrapfigure}

To further investigate the underlying feature representations, we visualize the learned object embedding space using t-SNE for the ten most frequent object categories in the dataset (Fig.~\ref{fig:qualitative_embeddings_tsne}). Compared to VL-SAT, our approach yields more compact and well-separated clusters, particularly for structurally similar object pairs such as \textit{ceiling}–\textit{floor}, \textit{wall}–\textit{door}, and \textit{curtain}–\textit{window}. These results suggest that our object encoder learns more discriminative features, which provide a semantically stronger foundation for subsequent relationship classification. Overall, the qualitative evidence supports our claim that improving object discrimination leads to more accurate relational reasoning.

\subsection{Discussions}
\label{ablation}

\vspace{-5pt}

\begin{table*}[ht]
  \centering
  \small
  \setlength{\tabcolsep}{2.1pt}
  \renewcommand{\arraystretch}{1.1}
  \begin{tabular}{l c cc cc cc cc cc}
    \toprule
    \multirow{2}{*}{Model} & \multirow{2}{*}{\begin{tabular}[c]{@{}c@{}}OFL\\ (ours)\end{tabular}} &
    \multicolumn{2}{c}{\textbf{Object}} &
    \multicolumn{2}{c}{\textbf{Predicate}} &
    \multicolumn{2}{c}{\textbf{Triplet}} &
    \multicolumn{2}{c}{\textbf{SGCls}} &
    \multicolumn{2}{c}{\textbf{PredCls}} \\
    \cmidrule(lr){3-4}\cmidrule(lr){5-6}\cmidrule(lr){7-8}\cmidrule(lr){9-10}\cmidrule(lr){11-12}
    & & R@1 & R@5 & R@1 & R@3 & R@50 & R@100 & R@20 & R@50 & R@20 & R@50 \\
    \midrule
    \multirow{3}{*}{SGPN \cite{wald2020learning}}
      & \xmark & 47.37 & 72.00 & 88.60 & 97.15 & 85.83 & 89.06 & 22.9 & 24.0 & 63.9 & 75.3 \\[-0.3ex]
      & \cmark & 54.49 & 75.02 & 90.10 & 98.06 & 88.83 & 91.16 & 29.8 & 31.0 & 68.2 & 79.0 \\
      & \textcolor{black}{} & \textcolor{gray}{\textbf{+7.12\%}} & \textcolor{gray}{\textbf{+3.02\%}} & \textcolor{gray}{\textbf{+1.50\%}} & \textcolor{gray}{\textbf{+0.91\%}} & \textcolor{gray}{\textbf{+3.00\%}} & \textcolor{gray}{\textbf{+2.10\%}} & \textcolor{gray}{\textbf{+6.9\%}} & \textcolor{gray}{\textbf{+7.0\%}} & \textcolor{gray}{\textbf{+4.3\%}} & \textcolor{gray}{\textbf{+3.7\%}} \\
    \midrule
    \multirow{3}{*}{SGFN \cite{wu2021scenegraphfusion}}
      & \xmark & 56.18 & 78.04 & 89.61 & 98.01 & 89.50 & 92.05 & 31.5 & 33.0 & 67.7 & 79.2 \\[-0.3ex]
      & \cmark & 58.75 & 79.70 & 89.63 & 98.24 & 89.99 & 92.41 & 35.0 & 36.3 & 70.7 & 80.9 \\
      & \textcolor{black}{} & \textcolor{gray}{\textbf{+2.57\%}} & \textcolor{gray}{\textbf{+1.66\%}} & \textcolor{gray}{\textbf{+0.02\%}} & \textcolor{gray}{\textbf{+0.23\%}} & \textcolor{gray}{\textbf{+0.49\%}} & \textcolor{gray}{\textbf{+0.36\%}} & \textcolor{gray}{\textbf{+3.5\%}} & \textcolor{gray}{\textbf{+3.3\%}} & \textcolor{gray}{\textbf{+3.0\%}} & \textcolor{gray}{\textbf{+1.7\%}} \\
    \midrule
    \multirow{3}{*}{VL-SAT \cite{wang2023vl}}
      & \xmark & 55.68 & 78.06 & 89.81 & 98.45 & 89.43 & 92.22 & 32.0 & 33.5 & 67.8 & 80.0 \\[-0.3ex]
      & \cmark & 59.30 & 80.67 & 90.48 & 98.51 & 90.40 & 93.03 & 34.9 & 36.6 & 70.6 & 81.7 \\
      & \textcolor{black}{} & \textcolor{gray}{\textbf{+3.62\%}} & \textcolor{gray}{\textbf{+2.61\%}} & \textcolor{gray}{\textbf{+0.67\%}} & \textcolor{gray}{\textbf{+0.06\%}} & \textcolor{gray}{\textbf{+0.97\%}} & \textcolor{gray}{\textbf{+0.81\%}} & \textcolor{gray}{\textbf{+2.9\%}} & \textcolor{gray}{\textbf{+3.1\%}} & \textcolor{gray}{\textbf{+2.8\%}} & \textcolor{gray}{\textbf{+1.7\%}} \\
    \bottomrule
  \end{tabular}
  \caption{\textbf{Ablation studies on OFL.} \xmark\ and \cmark\ indicate whether OFL is not applied or applied, respectively. SGCls and PredCls are evaluated with graph constraints, and gray percentages indicate relative performance improvements when OFL is applied.}
  \label{tab:abl_combined}
  \vspace{-0.1cm}
\end{table*}

\begin{table*}[ht]
  \centering
  \small
  \resizebox{\textwidth}{!}{
  \begin{tabular}{ccc cc cc cc cc cc}
    \toprule
    \multirow{2}{*}{GSE} & \multirow{2}{*}{BEG} & \multirow{2}{*}{LSE} &
    \multicolumn{2}{c}{\textbf{Object}} &
    \multicolumn{2}{c}{\textbf{Predicate}} &
    \multicolumn{2}{c}{\textbf{Triplet}} &
    \multicolumn{2}{c}{\textbf{SGCls}} &
    \multicolumn{2}{c}{\textbf{PredCls}} \\
    \cmidrule(lr){4-5} \cmidrule(lr){6-7} \cmidrule(lr){8-9} \cmidrule(lr){10-11} \cmidrule(lr){12-13}
    & & & R@1 & mR@1 & R@1 & mR@1 & R@50 & mR@50 & R@50 & mR@50 & R@50 & mR@50 \\
    \midrule
     &  &  & 58.02 & 20.77 & 90.55 & 50.36 & 90.19 & 61.79 & 43.8 & 36.5 & 85.7 & 68.3 \\
    \checkmark &  &  & 59.28 & 21.10 & 90.69 & 50.80 & \textbf{91.51} & 62.59 & 46.0 & 39.9 & 87.0 & 68.5 \\
    \checkmark & \checkmark &  & 59.49 & 22.17 & 90.65 & 53.81 & 91.18 & 64.83 & 45.7 & 43.0 & 86.7 & 73.2 \\
    \checkmark & \checkmark & \checkmark & \textbf{59.53} & \textbf{22.56} & \textbf{91.27} & \textbf{56.32} & 91.40 & \textbf{65.31} & \textbf{46.1} & \textbf{44.5} & \textbf{87.7} & \textbf{74.7} \\
    \bottomrule
  \end{tabular}
  }
  \caption{\textbf{Ablation studies on proposed methods.} \checkmark\ indicates the use of each component. All metrics include mean recall (mR), and both SGCls and PredCls are evaluated without graph constraints.}
  \label{tab:abl_checklist_centered}
  \vspace{-0.4cm}
\end{table*}

We conduct comprehensive ablation experiments to quantify the contribution of each proposed component to our model's performance, as summarized in Table~\ref{tab:abl_combined} and ~\ref{tab:abl_checklist_centered}.

\vspace{0.4cm}

\textbf{Plug-in our object encoder to other models.} To verify that our performance gains considerably stem from the object encoder, we integrated our pretrained encoder into three representative baselines: SGPN \cite{wald2020learning}, SGFN \cite{wu2021scenegraphfusion}, and VL-SAT \cite{wang2023vl}, while keeping all other components unchanged. The results in Table~\ref{tab:abl_combined} show consistent improvements across all frameworks. For SGPN \cite{wald2020learning}, object recall increases at most 7\% and other metrics also increased significantly. Most notably, even the state-of-the-art VL-SAT \cite{wang2023vl} benefits from our object encoder, where overall metrics are improved. These consistent enhancements across different architectures confirm that our pretraining strategy yields more discriminative and transferable object embeddings that directly benefit relationship prediction.

\textbf{Effectiveness of GSE.} Ablating the GSE component reveals its importance for object discrimination. Table ~\ref{tab:abl_checklist_centered} shows that removing GSE leads to noticeable degradation in overall recall metrics, with a drop of approximately 1--2\%. This suggests that incorporating spatial proximity primarily benefits object representation quality and predicate reasoning, thereby supporting our assumptions.

\textbf{Effectiveness of BEG.} The ablation results demonstrate that BEG is crucial for accurate relationship modeling. When BEG is removed, performance drops significantly in overall metrics. Notably, all mR metrics decrease by approximately 1--4\%, confirming that explicitly modeling directional asymmetry via controlled information flow between forward and reverse edges substantially improves contextual understanding, highlighting its importance in capturing the inherent directionality of semantic relationships between objects.

\textbf{Effectiveness of LSE.} As shown in Table ~\ref{tab:abl_checklist_centered}, removing the LSE results in performance drops across all metrics. Especially,  performance decreases by approximately 1-2\% in SGCls and PredCls when this component is removed. This confirms that explicitly preserving information of spatial relationship helps balance the representation learning between object semantics and spatial relationships, preventing the model from over-relying on object features alone.

The full model incorporating all proposed methods achieves the best performance across nearly all metrics, demonstrating their complementary nature in addressing the fundamental challenges of 3D scene graph prediction.
\section{Conclusion}
\label{conclusion}

We propose an effective framework that leverages object features to enhance relationship prediction in 3D semantic scene graph prediction. Specifically, we introduce a simple yet effective pretraining strategy for learning a more discriminative object feature space, along with a novel Bidirectional Edge Gating mechanism designed to fully exploit this representation. In addition, by integrating auxiliary tasks and a dedicated relationship encoder, our approach outperforms the existing methods under the closed-vocabulary 3DSSG setting. Extensive experiments demonstrate the effectiveness of our approach, particularly highlighting significant improvements in the PredCls task. We believe this work establishes a solid foundation for future research in 3DSSG, highlighting the potential of integrating advances in object detection and classification to further advance 3D scene understanding.

\section*{Acknowledgments}
This work was supported by the National Research Foundation of Korea (NRF) grant funded by the Korea government(MSIT)(RS-2024-00456589) and Institute of Information \& communications Technology Planning \& Evaluation (IITP) grant funded by the Korea government(MSIT) (No. RS-2025-02263277 and RS-2022-00155911, Artificial Intelligence Convergence Innovation Human Resources Development (Kyung Hee University)).

\bibliographystyle{plain}
\bibliography{references}


\newpage
\appendix
\section*{Appendix}

\setcounter{section}{0}
\renewcommand\thesection{\Alph{section}}

\textbf{Overview.} The Appendix is structured as:

\begin{itemize}[leftmargin=*, itemsep=2pt]
\item \textbf{Appendix \S\ref{rel:relworks}} Related Works
\item \textbf{Appendix \S\ref{obs:explanations}} Details of Observations and Methodology
\item \textbf{Appendix \S\ref{implement_details}} Implementation Details
\item \textbf{Appendix \S\ref{experiments_add}} Additional Experiments
\item \textbf{Appendix \S\ref{limitations}} Limitations and Further Works
\end{itemize}

\section{Related Works}
\label{rel:relworks}

\textbf{Supervised Contrastive Learning.} Contrastive learning has recently emerged as a powerful paradigm for self-supervised representation learning. Gutmann \emph{et al.}\ \cite{gutmann2010noise} laid the theoretical foundation by casting representation learning as density-ratio estimation between target and noise distributions. SimCLR \cite{chen2020simple} established the practical effectiveness of this paradigm through the NT-Xent objective, while MoCo \cite{he2020momentum} extended this idea by introducing a momentum encoder and a memory bank to effectively enlarge the set of negatives. Yeh \emph{et al.}\ \cite{yeh2022decoupled} subsequently introduced Decoupled Contrastive Learning (DCL), a variant designed to reduce negative–positive coupling by removing the positive term from the denominator. Contrastive objectives have also been successfully adapted to fully supervised settings: SupCon \cite{khosla2020supervised} exploits class labels directly; TSC \cite{li2022targeted} utilizes deterministic class centres to address long-tailed distributions; and ParCon \cite{cui2021parametric} incorporates parametric centres within the MoCo framework. Motivated by the principles underlying DCL \cite{yeh2022decoupled} and SupCon \cite{khosla2020supervised}, we introduce a cross-modal supervised contrastive objective that leverages textual labels to learn a more discriminative object feature space, thereby enhancing predicate prediction accuracy.

\textbf{3D Point Cloud Understanding.} Early research on 3D point-cloud analysis sought affine-invariant representations directly from raw points to support downstream tasks. PointNet \cite{qi2017pointnet} introduced a shared multilayer perceptron with max-pool aggregation for global object descriptors, and PointNet++ \cite{qi2017pointnet++} refined this design through hierarchical farthest-point sampling and local neighbourhood grouping. Due to their computational efficiency, these models remain foundational for various 3D applications, and many 3DSSG pipelines still build upon them. Subsequent work has demonstrated that unsupervised objectives can yield even more transferable point descriptors. PointContrast \cite{xie2020pointcontrast} aligns local fragments from multiple scans of the same scene, while Hou \textit{et al.} \cite{hou2021exploring} exploit scene-level context by applying contrastive objectives on scene graphs, thereby improving data efficiency on indoor datasets. CrossPoint \cite{afham2022crosspoint} extends such alignment to the cross-modal setting of point clouds and RGB images, and CLIP$^{\mathbf 2}$ \cite{zeng2023clip2} jointly aligns language, images, and point clouds. The latter, however, employs a fixed prompt vocabulary and does not explicitly address instance-level discrimination within a 3D scene. In this work, we address this limitation by proposing a supervised contrastive formulation that explicitly leverages available image and text annotations. Drawing inspiration from CLIP$^{\mathbf 2}$ \cite{zeng2023clip2}, our object encoder fuses visual and spatial cues to produce more discriminative and semantically meaningful instance-level representations.

\textbf{3D Scene Graph Prediction.} Several approaches have attempted to incorporate relational or global context into 3D scene understanding. For instance, the pioneering SGPN \cite{wald2020learning} combined PointNet \cite{qi2017pointnet} with a graph neural network and introduced the widely adopted 3DSSG benchmark. Subsequent graph-based methods propagate features along adjacency edges to refine node embeddings and encode spatial priors, but their effectiveness remains constrained by the quality and discriminative power of initial object descriptors. Within 3D scene-graph generation, SGFN \cite{wu2021scenegraphfusion} introduced a feature-wise attention mechanism within a GNN framework, substantially outperforming geometry-only baselines relying solely on PointNet embeddings. Similarly, SGGpoint \cite{zhang2021sggpoint} leveraged edge-oriented graph convolutions to incorporate multi-dimensional geometric features explicitly into relationship modeling, while Zhang \textit{et al.} \cite{zhang2021knowledge} employed graph auto-encoders to explicitly integrate prior knowledge of object and predicate classes. More recent studies have explored alternative learning paradigms to address limitations inherent to fully supervised training. VL-SAT \cite{wang2023vl} leverages visual–linguistic auxiliary tasks during training, achieving state-of-the-art performance by enriching object and predicate representations. Koch \textit{et al.}\ \cite{koch2024sgrec3d} employ a self-supervised reconstruction objective as a regularizer to enhance the quality of learned latent representations, and 3D-VLAP \cite{wang2024weakly} adopts a weakly supervised approach, aligning 3D data with 2D images and textual labels.

\textbf{Object-Centric Representation Learning.} Object-centric approaches have proliferated in both vision and NLP domains. In 2D scene-graph generation, KERN \cite{chen2019knowledge} leverages statistical object co-occurrence patterns to enhance relationship prediction accuracy while regularizing semantic predictions to address uneven distribution issues. Similar to the present framework, KERN relies on high-quality object detections from Faster R-CNN. For visual-relationship detection, ViP \cite{li2017vip} implements a phrase-guided message-passing structure that jointly infers subject, predicate, and object elements. In knowledge-graph completion research, TransR \cite{lin2015learning} maps entities and relations into distinct embedding spaces, extending previous models such as TransE and TransH. The approach proposed in this work builds upon these foundational ideas while placing greater emphasis on learning highly discriminative object features, which subsequently serve as strong priors for predicate estimation.

\textbf{Open-Vocabulary Settings.} Recently, research utilizing VLM models such as OpenSeg \cite{ghiasi2022scaling} and InstructionBLIP \cite{dai2023instructblip} for Scene Graph Generation has been actively conducted. Typically, hybrid approaches that utilize both VLM/LLM and GNN have emerged. Open3DSG \cite{koch2024open3dsg} is a representative example that performs knowledge distillation to lighter PointNet and GCN using well-trained VLMs\cite{ghiasi2022scaling, dai2023instructblip}. Specifically, they extract object features from scene images through OpenSeg and extract relationship features between two objects in images through InstructionBLIP. They perform knowledge distillation so that this information can be reflected in the node/edge features of GNN and the object/relationship PointNet respectively, pretraining the final scene graph model. During inference, they use CLIP for nodes and LLM for estimating relationships between nodes in an open-vocabulary setting. Meanwhile, Gu \emph{et al.} \cite{gu2024conceptgraphs} presented an approach that constructs Scene Graphs using only a combination of VLM/LLM. They primarily utilized LLaVA-7B, an LVLM, to detect objects from RGBD image sequences captured in scenes and set these as nodes in the scene graph. After node detection, they construct an MST (Minimum Spanning Tree) using the IOU of bounding boxes as weights for all 3D object pairs, and finally use LLMs such as GPT-4 to infer edges. By utilizing LVLM/LLM in both node and edge prediction processes, their work became pioneering research in scene graph generation in open-vocabulary settings, demonstrating the potential for expansion to various downstream tasks. Additionally, in 2025, Zhang \emph{et al.} \cite{zhang2025open} extended the existing 3D Scene Graph Generation task to propose a VLM/LLM-based model that outperforms existing models, along with a dataset that can construct more practical scene graphs considering interactive objects and functional relationships. Similar other studies \cite{rotondi2025fungraph, kassab2024bare, zemskova20243dgraphllm, devarakonda2024orionnav, werby2024hierarchical} are continuing research in directions similar to ConceptGraph \cite{gu2024conceptgraphs}, showing great potential for development in this field.

Our method departs from prior work in three key aspects. First, whereas most existing approaches primarily focus on predicate prediction, we prioritize learning highly discriminative and semantically robust object-level representations, which subsequently improve predicate inference. Second, in contrast to earlier contrastive pipelines \cite{afham2022crosspoint, xie2020pointcontrast, zeng2023clip2}, our encoder is explicitly optimized at the instance level, effectively capturing relational semantics critical to the demands of 3DSSG. Lastly, compared to those researches using only VLM/LLM to build scene graph, our study focused on closed-vocabulary settings to effectively verify our insights noted in Section~\ref{observations}, Observation of manuscript. Considering the fundamental differences between GNN-based and VLM/LLM-based methods, we chose a GNN-based approach to clearly and effectively verify our hypothesis: predicate classification errors are strongly correlated with incorrect or uncertain object label predictions. On the other hand, we excluded VLM/LLM from our choices to remove dependency on foundation model performance which will limit our ability to improve object encoder discrimination power.

\section{Details of Observations and Methodology}
\label{obs:explanations}

\subsection{Object Classification Entropy and Predicate Errors} 
\label{obs:entropy_obs}

We provide additional details on the empirical observation presented in the manuscript.

The conditional entropy $H(o \mid \mathbf{z})$ is computed based on the object classification distribution predicted by the model. Specifically, let $P(\hat{o}_i \mid \mathbf{z}_i)$ denote the class prediction distribution output by the GNN for object $i$; the model’s prediction is expressed as:
\begin{equation}
    P(\hat{o}_i = c | \mathbf{z}^t_i) = \frac{\exp(f_g(\mathbf{z}^t_i)[c])}{ \sum_{c^{'}} \exp(f_g(\mathbf{z}^t_i)[c^{'}]) }
\end{equation}

where $\mathbf{z}^t_i$ is the object feature vector, and $f_g$ denotes the abstract representation of our object predictor. We then compute the entropy of this distribution, representing the uncertainty of our prediction, as follows:
\begin{equation}
    H( \hat{o}_i | \mathbf{z}^t_i ) = - \sum_{c} P(\hat{o}_i=c | \mathbf{z}^t_i ) \log  P(\hat{o}_i=c | \mathbf{z}^t_i )
\end{equation}

We examined the correlation between the entropy of predicted objects (subject and object) and predicate prediction accuracy. Specifically, for each predicted relation $\langle s,p,o\rangle$, we performed the following procedure: $\textbf{(i)}$ We retain only instances where the subject ($s$) and object ($o$) are \emph{both} classified correctly at Top-1, thus isolating the influence of object prediction confidence; $\textbf{(ii)}$ We compute the \emph{accumulated entropy} as:
\begin{equation}
E_{\text{obj}}=\tfrac12\bigl[H(s|\mathbf{z}^t_{s})+H(o|\mathbf{z}^t_{o})\bigr],
\end{equation}
which is the average conditional entropy of the two predicted objects (subject and object); $\textbf{(iii)}$ We assign a binary predicate-error indicator $e$ as:
\begin{equation}
e = 
\begin{cases} 
	1, &\text{if }\hat{p}\neq p^{\star},\\
	0,&\text{otherwise}, \end{cases}
\end{equation}
where $\hat{p}$ and $p^{\star}$ denote the predicted and ground-truth predicates, respectively. From the collected pairs $(E_{\text{obj}}, e)$, we construct histograms of $E_{\text{obj}}$ and compute corresponding predicate-error rates.

The predicate-error ratio for each bin is computed as $\sum e / \#\text{pairs}$. The empirical finding that the predicate-error rate increases monotonically with object-classification entropy lends persuasive support to our central hypothesis: higher confidence in object predictions leads to more accurate predicate inference.


\newtheorem{assumption}{Assumption}
\crefname{assumption}{Assumption}{Assumptions} 

\subsection{Probabilistic Details of hypothesis formulation}
\label{obs:formula_details}

\textbf{Details on probabilistic formulation.} As observed in Section.~\ref{observations}, our key hypothesis is that the predicate prediction relies—explicitly or implicitly—on the object semantics and their estimation confidence. This motivates the factorization:

\begin{equation}
P(e_{ij} \mid \mathbf{z}_i, \mathbf{z}_j) =
\sum_{o'_i,o'_j \in \mathcal{O}}
P(e_{ij} \mid o'_i,o'_j)\;
P(o'_i \mid \mathbf{z}_i)\;
P(o'_j \mid \mathbf{z}_j).
\label{eq:revisit_formulation}
\end{equation} 
which we justify below.

\begin{assumption}[Conditional sufficiency]
\label{asm:sufficiency}
\footnote{This assumption is an idealized approximation to real data. Within our derivations, we treat it as an exact equality.} 
Given the object semantics, the predicate is approximately conditional independent of the embeddings:
\[
P(e_{ij}\mid o'_i,o'_j,\mathbf{z}_i,\mathbf{z}_j)=P(e_{ij}\mid o'_i,o'_j).
\]
\end{assumption}
\begin{assumption}[Decoupled object posteriors]
\label{asm:independence}
The per-object posteriors factorize given their own embeddings:
\[
P(o'_i,o'_j\mid \mathbf{z}_i,\mathbf{z}_j)\;=\;P(o'_i\mid \mathbf{z}_i)\,P(o'_j\mid \mathbf{z}_j).
\]
\end{assumption}

\begin{prop}
\label{prop:formulation}
Under Assumptions~\ref{asm:sufficiency} and~\ref{asm:independence}, the factorization in Eq.~\eqref{eq:revisit_formulation} holds, highlighting that accurate object prediction is the primary cue for successful predicate estimation.
\end{prop}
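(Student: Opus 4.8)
The plan is to derive the factorization in Eq.~\eqref{eq:revisit_formulation} by a direct application of the law of total probability, marginalizing over the latent object labels and then invoking the two stated assumptions to simplify each factor. First I would write the exact identity that introduces the object labels as intermediate variables:
\begin{equation}
P(e_{ij}\mid \mathbf{z}_i,\mathbf{z}_j)=\sum_{o'_i,o'_j\in\mathcal{O}}
P(e_{ij}\mid o'_i,o'_j,\mathbf{z}_i,\mathbf{z}_j)\;
P(o'_i,o'_j\mid \mathbf{z}_i,\mathbf{z}_j).
\label{eq:total_prob}
\end{equation}
This step is unconditional and requires no assumptions: it is simply the sum rule applied to the joint distribution $P(e_{ij},o'_i,o'_j\mid \mathbf{z}_i,\mathbf{z}_j)$ factored via the chain rule. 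The labels $o'_i,o'_j$ range over the full class set $\mathcal{O}$, so no probability mass is lost.

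Next I would apply Assumption~\ref{asm:sufficiency} (conditional sufficiency) to the first factor in Eq.~\eqref{eq:total_prob}, replacing $P(e_{ij}\mid o'_i,o'_j,\mathbf{z}_i,\mathbf{z}_j)$ with $P(e_{ij}\mid o'_i,o'_j)$; intuitively this says the embeddings carry no predicate-relevant information beyond what the object labels already encode. Then I would apply Assumption~\ref{asm:independence} (decoupled posteriors) to the second factor, splitting $P(o'_i,o'_j\mid \mathbf{z}_i,\mathbf{z}_j)$ into the product $P(o'_i\mid \mathbf{z}_i)\,P(o'_j\mid \mathbf{z}_j)$. Substituting both simplifications into Eq.~\eqref{eq:total_prob} yields exactly the claimed Eq.~\eqref{eq:revisit_formulation}, completing the derivation.

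The computation itself is routine, so the substantive content lies less in the algebra than in articulating why the two assumptions are the right ones and what they buy us. The main obstacle I anticipate is not a technical difficulty but a modeling justification: Assumption~\ref{asm:sufficiency} is an idealization (as the paper's own footnote concedes) that holds only approximately, since in practice a learned predicate head may exploit residual geometric or appearance cues in $\mathbf{z}_i,\mathbf{z}_j$ that are not fully captured by the discrete labels $o_i,o_j$. I would therefore frame the proposition as holding under these assumptions treated as exact equalities, and in the surrounding discussion emphasize the qualitative consequence the paper actually cares about—namely that sharper posteriors $P(o\mid\mathbf{z})$ concentrate the mixture in Eq.~\eqref{eq:revisit_formulation} onto the correct $(o_i,o_j)$ term, lowering the entropy of the predicate prediction. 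That interpretive bridge, rather than the two-line algebraic manipulation, is where the real care is needed.
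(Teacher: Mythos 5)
Your proposal is correct and follows exactly the paper's own argument: the law of total probability over the latent labels $o'_i,o'_j$, then Assumption~\ref{asm:sufficiency} to drop the embedding dependence in the predicate term and Assumption~\ref{asm:independence} to factorize the joint posterior. Your added remarks on the idealized nature of the sufficiency assumption and the entropy-sharpening interpretation match the paper's footnote and surrounding discussion, so there is nothing to correct.
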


\begin{proof}
By the law of total probability,
\[
P(e_{ij}\mid \mathbf{z}_i,\mathbf{z}_j)
=\sum_{o'_i,o'_j\in\mathcal{O}}
P(e_{ij}\mid o'_i,o'_j,\mathbf{z}_i,\mathbf{z}_j)\,
P(o'_i,o'_j\mid \mathbf{z}_i,\mathbf{z}_j).
\]
Apply Assumption~\ref{asm:sufficiency} to drop the dependence on $(\mathbf{z}_i,\mathbf{z}_j)$ in the first term, and Assumption~\ref{asm:independence} to factorize the second term, yielding Eq.~\eqref{eq:revisit_formulation}.\footnote{Equality holds under the modeling assumptions ~\ref{asm:sufficiency} and ~\ref{asm:independence}; in practice, they are approximations that ignore residual inter-object correlations and embedding-specific cues.}
\end{proof}

Eq.~\eqref{eq:revisit_formulation} makes explicit that improvements in predicate prediction are driven chiefly by sharper object posteriors $P(o'_i\mid \mathbf{z}_i)$ and $P(o'_j\mid \mathbf{z}_j)$. In other words, the quality of object representations—how well $\mathbf{z}_i$ and $\mathbf{z}_j$ resolve their semantic labels—dominates, while residual cues outside object semantics play a limited role. We validate this claim through extensive experiments in Sec.~\ref{experiments}.

\textbf{Extensibility to open-vocabulary settings.} Since we adopt the closed-vocabulary setting of 3DSSG~\cite{wald2020learning}, our assumptions yield strong performance on 3D scene graph prediction. By contrast, these assumptions are not directly compatible with open-vocabulary settings. In 3DSSG, predicate semantics primarily encode geometric relations (e.g., ‘hanging on’, ‘standing on’), and prior work correspondingly targets a limited predicate label set. While our results show that this focus is effective under a closed vocabulary, it can overlook novel relations that arise beyond the predefined set. We posit that if object embeddings are made discriminative for additional semantics—such as functionality or color—predicate estimation could be extended toward open-vocabulary settings. In particular, decoupling fine-grained object and predicate categories and aligning the embedding space with this factorization offers a viable path to adapting our framework to open-vocabulary 3D scene graph prediction.

\subsection{Theoretical details of object feature learning.} 
\label{obs:loss_details}

Unlike existing pre-training approaches for 3D point clouds, this work employs supervised contrastive learning in which the positive term is deliberately omitted from the loss denominator. As noted in Appendix \S~\ref{rel:relworks}, prior work on multimodal contrastive learning for 3D objects typically relies on InfoNCE-style losses, similar to those used in SimCLR. The conventional motivation is to maximize mutual information $I(o;\mathbf{z})$ by minimizing the InfoNCE loss, which can be interpreted as the conditional entropy $H(o|\mathbf{z})$ with positive terms included in the denominator. However, as in DCL \cite{yeh2022decoupled}, the positive term in the denominator is deliberately omitted to enhance representational capacity within the self-supervised learning framework. This section provides theoretical evaluation; the precise loss definition is provided in the manuscript. To analyze the impact of the positive term in the denominator, consider a text-modal loss that contains this term, similar to previous works ~\cite{khosla2020supervised, afham2022crosspoint, zeng2023clip2, oord2018representation, cui2021parametric}. $\mathcal{L}^i_{\text{text}}$ can be formulated as:

\begin{equation}
    \mathcal{L}^{\text{text}}_i = -s(\mathbf{z}^t_i, \mathbf{z}^i_{\text{text}}) /\ \tau + \log U_i, \quad  U_i = \sum_{a \in \mathcal{B}} \exp(s(\mathbf{z}^t_i, \mathbf{z}^a_{\text{text}})/\ \tau)
\label{eq:loss_pos_deno}
\end{equation}

$U_i$ can be decomposed into positive and negative sample terms, $\tilde{P}_i$ and $\tilde{N}_i$, respectively:

\begin{equation}
    \tilde{N}_i = \sum_{n \in \mathcal{N}(i)} \exp(s(\mathbf{z}^t_i, \mathbf{z}^n_{\text{text}}) /\ \tau ), \quad \tilde{P}_i = U_i - \tilde{N}_i 
\end{equation}

The following proposition ~\ref{prop:dcl_like} is presented. For clarity, all embedding vectors are assumed to be normalized.

\begin{prop} 
\label{prop:dcl_like}
There exists a multiplier $q_{\text{text}}^i$ in the gradient $\mathcal{L}_{\text{text}}^i$ analogous to the negative-positive coupling (NPC) term proposed in \cite{yeh2022decoupled}.

\begin{equation}
-\nabla_{\mathbf{z}^t_i} \mathcal{L}_i^{\text{text}} = \frac{q^i_{\text{text}}}{\tau} \left(  \mathbf{z}^i_{\text{text}} - \frac{1}{\tilde{N}_i} \sum_{n \in \mathcal{N}(i)} \exp (s(\mathbf{z}^t_i, \mathbf{z}^n_{\text{text}}) /\ \tau) \cdot \mathbf{z}^n_{\text{text}}  \right)
\end{equation}
\end{prop}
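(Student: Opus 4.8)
The plan is to establish the identity by direct differentiation of $\mathcal{L}_i^{\text{text}}$ with respect to the anchor embedding $\mathbf{z}^t_i$, and then to rearrange the resulting vector into the stated factored form, reading off the scalar $q^i_{\text{text}}$ from the common prefactor. Since all embeddings are assumed normalized, I would treat the similarity $s(\mathbf{z}^t_i,\mathbf{z}^a_{\text{text}})$ as the inner product $\langle \mathbf{z}^t_i,\mathbf{z}^a_{\text{text}}\rangle$, so that $\nabla_{\mathbf{z}^t_i} s(\mathbf{z}^t_i,\mathbf{z}^a_{\text{text}}) = \mathbf{z}^a_{\text{text}}$. This linearizes every term in the exponent and reduces the whole computation to bookkeeping over the batch $\mathcal{B}$.

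First I would differentiate the two pieces of $\mathcal{L}_i^{\text{text}} = -s(\mathbf{z}^t_i,\mathbf{z}^i_{\text{text}})/\tau + \log U_i$ separately. The first term contributes $-\tfrac{1}{\tau}\mathbf{z}^i_{\text{text}}$, while $\nabla_{\mathbf{z}^t_i}\log U_i = \tfrac{1}{\tau U_i}\sum_{a\in\mathcal{B}}\exp(s(\mathbf{z}^t_i,\mathbf{z}^a_{\text{text}})/\tau)\,\mathbf{z}^a_{\text{text}}$. Next I would split the sum over $\mathcal{B}$ into its positive part $\mathcal{P}(i)$ and negative part $\mathcal{N}(i)$. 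Because the prompt template depends only on the class label, every positive index $p\in\mathcal{P}(i)$ shares the same CLIP text embedding $\mathbf{z}^p_{\text{text}}=\mathbf{z}^i_{\text{text}}$, so the positive block of $\nabla_{\mathbf{z}^t_i}\log U_i$ collapses to $\tfrac{1}{\tau U_i}\tilde{P}_i\,\mathbf{z}^i_{\text{text}}$ with $\tilde{P}_i=U_i-\tilde{N}_i$. Combining this with the first-term contribution produces the coefficient $(\tilde{P}_i-U_i)/U_i = -\tilde{N}_i/U_i$ on $\mathbf{z}^i_{\text{text}}$, and factoring $\tilde{N}_i/(\tau U_i)$ out of the entire expression yields exactly the stated form with $q^i_{\text{text}} = \tilde{N}_i/U_i$.

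I expect the main obstacle to be the bookkeeping in the collapse of the positive block and the sign manipulation that isolates the multiplier, rather than any analytic difficulty. In particular, the clean factorization hinges on recognizing that the positive exponentials in $\nabla_{\mathbf{z}^t_i}\log U_i$ all point along the single direction $\mathbf{z}^i_{\text{text}}$; had one instead retained distinct per-instance positive embeddings, the positive contribution would not reduce to a multiple of $\mathbf{z}^i_{\text{text}}$ and the scalar $q^i_{\text{text}}$ would fail to emerge. Once the coefficient is identified, I would close the argument by interpreting $q^i_{\text{text}} = \tilde{N}_i/(\tilde{P}_i+\tilde{N}_i)\in(0,1)$ as the negative-positive coupling multiplier of DCL \cite{yeh2022decoupled}, noting that removing the positive term from the denominator—as in our manuscript loss—sends $U_i\to\tilde{N}_i$ and hence $q^i_{\text{text}}\to 1$, thereby decoupling the gradient and motivating our design choice.
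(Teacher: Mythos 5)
Your proposal is correct and follows essentially the same route as the paper's proof: direct differentiation of $-s(\mathbf{z}^t_i,\mathbf{z}^i_{\text{text}})/\tau + \log U_i$ (treating $s$ as an inner product of normalized vectors), splitting the softmax sum over $\mathcal{B}$ into $\mathcal{P}(i)$ and $\mathcal{N}(i)$, collapsing the positive block via the observation that all $\mathbf{z}^p_{\text{text}}$ with the same label coincide with $\mathbf{z}^i_{\text{text}}$, and factoring out $q^i_{\text{text}}=\tilde{N}_i/U_i$. Your closing remark that omitting the positive term sends $U_i\to\tilde{N}_i$ and hence $q^i_{\text{text}}\to 1$ matches the paper's subsequent discussion of why the decoupled loss is preferred.
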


where the multiplier $q_{\text{text}}^i$ is defined as:

\begin{equation}
   q_{\text{text}}^i  = \frac{\tilde{N}_i}{U_i} = \frac{\tilde{N}_i}{\tilde{P}_i + \tilde{N}_i}
\end{equation}

\begin{proof}
\[
\begin{split}
    -\nabla_{\mathbf{z}^t_i} \mathcal{L}_i^{\text{text}} &= \frac{1}{\tau} \left( 
        \mathbf{z}^i_{\text{text}} - \frac{1}{U_i} \sum_{a \in \mathcal{B}} \exp(s(\mathbf{z}^t_i, \mathbf{z}^a_{\text{text}})/\ \tau)) \cdot \mathbf{z}^a_{\text{text}}
    \right) \\
    &= \frac{1}{\tau} \left( 
        \mathbf{z}^i_{\text{text}} - \frac{1}{U_i} \sum_{p \in \mathcal{P}(i)} \exp(s(\mathbf{z}^t_i, \mathbf{z}^p_{\text{text}})/\ \tau)) \cdot \mathbf{z}^p_{\text{text}} - \frac{1}{U_i} \sum_{n \in \mathcal{N}(i)} \exp(s(\mathbf{z}^t_i, \mathbf{z}^n_{\text{text}})/\ \tau)) \cdot \mathbf{z}^n_{\text{text}}
    \right) \\
    &= \frac{1}{\tau} \left( 
        \mathbf{z}^i_{\text{text}} \left(1 - \frac{1}{U_i} \sum_{p \in \mathcal{P}(i)} \exp(s(\mathbf{z}^t_i, \mathbf{z}^p_{\text{text}})/\ \tau)) \right) - \frac{\tilde{N}_i}{U_i} \cdot \frac{1}{\tilde{N}_i} \sum_{n \in \mathcal{N}(i)} \exp(s(\mathbf{z}^t_i, \mathbf{z}^n_{\text{text}})/\ \tau)) \cdot \mathbf{z}^n_{\text{text}}
    \right) \\
    &= \frac{1}{\tau} \frac{\tilde{N}_i}{U_i} \left( \mathbf{z}^i_{\text{text}} - \frac{1}{\tilde{N}_i} \sum_{n \in \mathcal{N}(i)} \exp(s(\mathbf{z}^t_i, \mathbf{z}^n_{\text{text}}) /\ \tau) \cdot \mathbf{z}^n_{\text{text}} \right) \\
    &= \frac{q^i_{\text{text}}}{\tau} \left( \mathbf{z}^i_{\text{text}} - \frac{1}{\tilde{N}_i} \sum_{n \in \mathcal{N}(i)} \exp(s(\mathbf{z}^t_i, \mathbf{z}^n_{\text{text}}) /\ \tau) \cdot \mathbf{z}^n_{\text{text}} \right)
\end{split}
\]

Since $\mathbf{z}^p_{\text{text}}$ is identical for all $p \in \mathcal{P}(i)$, these terms can be consolidated as $\mathbf{z}^i_{\text{text}}$.  
\end{proof}

Proposition~\ref{prop:dcl_like} demonstrates that retaining the positive term in the denominator reproduces the critical issues identified in~\cite{yeh2022decoupled}: it simplifies the pretext task and generates disproportionately large gradients for instances with close positive and distant negative pairs. In the 3DSSG context, eliminating $q_{\text{text}}^{i}$ provides compelling analytical evidence supporting this hypothesis. Since every $\mathbf{z}_{\text{text}}^{p}$ with the same class label is identical, $q_{\text{text}}^{i}$ can be reformulated as:

\begin{equation}
    q_{\text{text}}^i = \frac{\tilde{N}_i}{\tilde{P}_i+\tilde{N}_i} = \frac{\tilde{N}_i}{|\mathcal{P}(i)| \exp(s(\mathbf{z}^t_i, \mathbf{z}^i_{\text{text}}) /\ \tau) + \tilde{N}_i}
\label{eq:pos_deno_contained}
\end{equation}

where $\mathcal{P}(i)$ represents the set of positive sample indices as defined in the manuscript.

Intuitively, insufficient attractive force between an anchor and its positives degrades class separation. This effect is particularly pronounced in 3DSSG, which contains challenging hard negatives such as distinguishing between \emph{cabinet} and \emph{kitchen cabinet}. Since their embeddings are inherently similar, their proximity to the anchor makes them exceptionally difficult to differentiate. Specifically, when $q_{\text{text}}^{i} \ll 1$ due to a large $\exp\!\bigl(s(\mathbf{z}^t_i,\mathbf{z}^i_{\text{text}})\bigr)$, the model fails both to effectively repel hard negatives $\mathbf{z}_{\text{text}}^{h}$ and to maintain adequate attraction to $\mathbf{z}^i_{\text{text}}$. This problematic behavior persists even in loss formulations that ignore class frequency.

Furthermore, $q_{\text{text}}^{i}$ is disproportionately suppressed for frequent classes, thereby weakening discrimination precisely where object accuracy is most critical to the predicate-error analysis. While retaining the positive term could theoretically help address long-tailed class imbalance in 3DSSG by balancing the impact of tail classes, it would obscure the fundamental relationship under investigation. Therefore, the positive term is deliberately omitted from the denominator. The same modification applies to $\mathcal{L}_{\text{visual}}^{i}$, further reinforcing the coherence of the loss design.

This analysis extends to variants where the class-frequency weight (i.e., $|\mathcal{P}(i)|$) is omitted, yielding:
\begin{equation}
\begin{split}
\mathcal{L}_{\text{CE-like}}^i = -s(\mathbf{z}^t_i, \mathbf{z}^{c_i}_{\text{text}})/\ \tau + \log U^c_i, \quad U^{c}_i = \exp(s(\mathbf{z}^t_i, \mathbf{z}^{c_i}_{\text{text}})/\ \tau) + \sum_{c \neq c_i} \exp (s(\mathbf{z}^t_i, \mathbf{z}^c_{\text{text}})/\ \tau) 
\end{split}
\label{eq:convention_ce}
\end{equation}

which is algebraically equivalent to the conventional Cross Entropy loss.

Let $c$ denote the class label index and $c_i$ the class label index of the anchor. Under this definition, $\mathbf{z}_{\text{text}}^{c_i}$ serves as the unique positive for the anchor $\mathbf{z}^{t}_{i}$. The Cross Entropy-style loss is therefore given by Eq. ~\ref{eq:convention_ce}, and the corresponding multiplier $q_{\text{CE-like}}^i$, following DCL \cite{yeh2022decoupled}, is formulated as:
\begin{equation}
    q_{\text{CE-like}}^i = \frac{\sum_{c \neq c_i} \exp (s(\mathbf{z}^t_i, \mathbf{z}^c_{\text{text}})/\ \tau)}{\exp(s(\mathbf{z}^t_i, \mathbf{z}^{c_i}_{\text{text}})/\ \tau) + \sum_{c \neq c_i} \exp (s(\mathbf{z}^t_i, \mathbf{z}^c_{\text{text}})/\ \tau)}    
\label{eq:mul_infonce}
\end{equation}
This formulation clearly exhibits the problems described in DCL \cite{yeh2022decoupled}. It neither addresses the long-tailed distribution issues highlighted in Eq. \ref{eq:loss_pos_deno} nor produces a sufficiently discriminative feature space, making it unsuitable for validating the hypotheses.

In Appendix \S~\ref{abl:quantitative}, empirical comparisons between these conventional approaches and the proposed loss formulation are presented. The results conclusively demonstrate that excluding the positive term from the denominator is indeed an appropriate design choice.

\section{Implementation Details}
\label{implement_details}

\textbf{Object encoder pre-training. } Object encoder pre-training is performed in PyTorch 1.12 (CUDA 11.3) on a single NVIDIA GeForce RTX 3060 Ti GPU. The network is optimized for 100 epochs using Adam, with a global batch size of 512, an initial learning rate of 0.01, and a cosine decay schedule to zero. Training takes roughly five hours to converge, and we retain the checkpoint achieving the highest cumulative validation accuracy summed across top-$K$ accuracies for $K \in \{ 1,5,10 \}$ where classifier is CLIP\cite{radford2021learning}-based maximum similarity selection. For data preparation, we extract individual object point clouds from the 3DSSG scenes, translate them to the origin, randomly rotate them about the $z$-axis, and uniformly down-sample to 256 points. RGB-D images are cropped according to the VL-SAT pipeline \cite{wang2023vl}. Using CLIP \cite{radford2021learning}, we select the four most similar images within the same scene that can be re-projected onto the object point cloud. As suitable images are not always available, the visual contrastive loss accommodates multiple images per positive sample. The original 3DSSG train/validation split is preserved throughout pre-training. The object encoder converges after approximately four hours under this setting. To evaluate classification accuracy in isolation, we train a three-layer MLP using Adam with a batch size of 256, a fixed learning rate of $1\times10^{-4}$, and standard Cross Entropy loss, without additional learning-rate scheduling.

\textbf{Scene graph prediction.} All 3D Semantic Scene Graph experiments are conducted in PyTorch 1.12 with CUDA 11.3 on a single NVIDIA GeForce RTX 3090 GPU. The model is trained for 100 epochs using AdamW, with a global batch size of eight scenes and an initial learning rate of $1\times10^{-4}$ that decays following a cosine schedule. Each object point cloud is randomly down-sampled to 256 points. Both the proposed Global Spatial Enhancement (GSE) block and the Feature-wise Attention (FAT) relation head employ eight attention heads, and the GNN stack is unrolled for two iterations. The total loss is a weighted sum of the object ($\lambda_{obj}$), relation ($\lambda_{pred}$), and LSE ($\lambda_{lse}$) terms, with coefficients set to 0.1, 3.0, and 1.0, respectively. Training converges in approximately 36 hours, and the checkpoint with the highest validation \emph{mean recall} (mR) at top-50 triplet prediction is selected for all quantitative and qualitative evaluations.

\section{Additional Experiments}
\label{experiments_add}

\begin{table}[!h]
\centering
\small
\setlength{\tabcolsep}{3pt}
\renewcommand{\arraystretch}{1.3} 
\begin{tabular}{l ccc ccc}
\toprule
\multirow{2}{*}{Model} &
    \multicolumn{3}{c}{\textbf{Accuracy (\%)}} &
    \multicolumn{3}{c}{\textbf{Mean Accuracy (\%)}} \\
    \cmidrule(lr){2-4}\cmidrule(lr){5-7}
    & Top‑1 & Top‑5 & Top‑10 & Top‑1 & Top‑5 & Top‑10 \\
    \midrule
Ours                                    & \textbf{58.37} & \textbf{75.61} & 82.09 & 19.43 & 43.83 & 54.14 \\
Loss Eq.\eqref{eq:loss_pos_deno} & 52.99 & 73.19 & 81.36 & \textbf{21.77} & \textbf{45.48} & \textbf{56.68} \\
Loss Eq.\eqref{eq:convention_ce}                                 & 54.16 & 74.58 & \textbf{82.59} & 17.63 & 41.47 & 54.42 \\
\bottomrule
\end{tabular}
\vspace{5pt}
\caption{Object classification performance of our method compared to loss variants of Eq.\eqref{eq:loss_pos_deno} and Eq.\eqref{eq:convention_ce}. Eq.\eqref{eq:loss_pos_deno} refers to our model with the positive term retained in the denominator, while Eq.\eqref{eq:convention_ce} corresponds to the Cross Entropy-style loss. “Top-$k$ Accuracy” indicates overall classification accuracy, whereas “Top-$k$ Mean Accuracy” denotes class-balanced accuracy averaged over all $160$ categories.}
\vspace{-10pt}
\label{tab:obj_class_cols}
\end{table}

\subsection{Metrics}
\label{exp:metrics}

Consistent with the 3DSSG protocol \cite{wald2020learning}, we report top-$K$ \emph{recall} (R@$K$) for both object and predicate recognition. Triplet scores are computed as the product of the subject, predicate, and object confidences; a triplet is deemed correct only when all three labels match the ground truth, and R@$K$ is then evaluated over these scores. We also adopt the two standard tasks introduced in \cite{xu2017scene}: Scene Graph Classification (SGCls), which evaluates complete triplets given ground-truth object boxes, and Predicate Classification (PredCls), which assesses predicate predictions under ground-truth object categories and boxes. For both tasks, we follow the evaluation protocol of Zhang \emph{et al.} \cite{zhang2021knowledge} and report \emph{recall} at top-$K$ (R@$K$), counting a prediction as correct if the predicted subject, predicate, and object jointly match at least one ground-truth relation. Collectively, these metrics provide a comprehensive assessment of both object recognition accuracy and relational reasoning performance.

\subsection{Quantitative Results}
\label{abl:quantitative}

\begin{table*}[!h]
  \centering
  \small
  \setlength{\tabcolsep}{3pt}
  \renewcommand{\arraystretch}{1.1}
  \begin{tabular}{l ccc ccc}
    \toprule
    \multirow{2}{*}{Model} &
    \multicolumn{3}{c}{\textbf{Accuracy (\%)}} &
    \multicolumn{3}{c}{\textbf{Mean Accuracy (\%)}} \\
    \cmidrule(lr){2-4}\cmidrule(lr){5-7}
    & Top‑1 & Top‑5 & Top‑10 & Top‑1 & Top‑5 & Top‑10 \\
    \midrule
    VL‑SAT (baseline)              & 40.07 & 64.37 & 74.36 & 4.23 & 16.25 & 26.09 \\
    \midrule
    Ours (full)                    & \textbf{58.37} & \textbf{75.61} & \textbf{82.09} & \textbf{19.43} & \textbf{43.83} & \textbf{54.14} \\
    \quad w/o visual modality      & 56.72 & 72.74 & 79.21 & 16.66 & 37.45 & 47.14 \\
    \quad w/o affine regular.      & 54.30 & 71.23 & 78.77 & 14.82 & 37.13 & 46.42 \\
    \quad w/o text modality        & 16.10 & 39.34 & 53.74 & 1.05 & 3.68 & 7.36 \\
    \bottomrule
  \end{tabular}
  \caption{Object classification performance result of Ours and without each proposed methods. “Top‑$k$ Accuracy” reports the overall object accuracy, whereas “Top‑$k$ Mean Accuracy” averages accuracy across all $160$ classes.}
  \label{tab:obj_cls_ablation}
\end{table*}

\begin{table*}[!h]
  \centering
  \small
  \resizebox{\textwidth}{!}{
  \begin{tabular}{ccc cc cc cc cc cc}
    \toprule
    \multirow{2}{*}{GSE} & \multirow{2}{*}{BEG} & \multirow{2}{*}{LSE} &
    \multicolumn{2}{c}{\textbf{Object}} &
    \multicolumn{2}{c}{\textbf{Predicate}} &
    \multicolumn{2}{c}{\textbf{Triplet}} &
    \multicolumn{2}{c}{\textbf{SGCls}} &
    \multicolumn{2}{c}{\textbf{PredCls}} \\
    \cmidrule(lr){4-5} \cmidrule(lr){6-7} \cmidrule(lr){8-9} \cmidrule(lr){10-11} \cmidrule(lr){12-13}
    & & & R@1 & mR@1 & R@1 & mR@1 & R@50 & mR@50 & R@50 & mR@50 & R@50 & mR@50 \\
    \midrule
     &  &  & 58.02 & 20.77 & 90.55 & 50.36 & 90.19 & 61.79 & 43.8 & 36.5 & 85.7 & 68.3 \\
    \checkmark &  &  & 59.28 & 21.10 & 90.69 & 50.80 & \textbf{91.51} & 62.59 & 46.0 & 39.9 & 87.0 & 68.5 \\
    & \checkmark &  & 58.67 & 21.50 & 91.12 & 50.39 & 90.63 & 64.00 & 44.2 & 38.9 & 86.6 & 69.5 \\
    &  & \checkmark & 57.49 & 20.21 & 90.73 & 48.56 & 90.53 & 57.90 & 43.3 & 36.3 & 86.3 & 65.9 \\
    \checkmark & \checkmark &  & 59.49 & 22.17 & 90.65 & 53.81 & 91.18 & 64.83 & 45.7 & 43.0 & 86.7 & 73.2 \\
    \checkmark & & \checkmark & 59.47 & 21.42 & \textbf{91.50} & 50.48 & 91.24 & 64.66 & \textbf{46.5} & 39.9 & 87.3 & 71.8 \\
    & \checkmark & \checkmark & 57.72 & 20.43 & 91.38 & 50.47 & 90.85 & 60.90 & 43.9 & 38.1 & 86.3 & 73.2 \\
    \midrule
    \checkmark & \checkmark & \checkmark & \textbf{59.53} & \textbf{22.56} & 91.27 & \textbf{56.32} & 91.40 & \textbf{65.31} & 46.1 & \textbf{44.5} & \textbf{87.7} & \textbf{74.7} \\
    \bottomrule
  \end{tabular}
  }
  \caption{\textbf{Ablation studies on proposed methods.} \checkmark\ indicates the use of each component. All metrics include mean recall (mR), and both SGCls and PredCls are evaluated without graph constraints.}
  \label{tab:table8}
\end{table*}

\begin{table*}[t]
\centering
\small
\setlength{\tabcolsep}{4pt}
\renewcommand{\arraystretch}{1.1}
\begin{tabular}{l
                *{2}{cc} 
                *{2}{cc} 
                *{2}{cc} 
                *{2}{cc}}  
    \toprule
    \multirow{4}{*}{\textbf{Model}} &
    \multicolumn{6}{c}{\textbf{Predicate}} &
    \multicolumn{4}{c}{\textbf{Triplet}} \\
    \cmidrule(lr){2-7}\cmidrule(lr){8-11}
    & \multicolumn{2}{c}{\textbf{Head}} &
    \multicolumn{2}{c}{\textbf{Body}} &
    \multicolumn{2}{c}{\textbf{Tail}} &
    \multicolumn{2}{c}{\textbf{Unseen}} &
    \multicolumn{2}{c}{\textbf{Seen}} \\
    \cmidrule(lr){2-3}\cmidrule(lr){4-5}\cmidrule(lr){6-7}\cmidrule(lr){8-9}\cmidrule(lr){10-11}
    & mR@3 & mR@5 & mR@3 & mR@5 & mR@3 & mR@5 & R@50 & R@100 & R@50 & R@100 \\
    \midrule
    SGPN~\cite{wald2020learning}       & \textbf{96.66} & 99.17 & 66.19 & 85.73 & 10.18 & 28.41 & 15.78 & 29.60 & 66.60 & 77.03 \\
    SGFN~\cite{wu2021scenegraphfusion} & 95.08 & \textbf{99.38} & 70.02 & 87.81 & 38.67 & 58.21 & 22.59 & 35.68 & 71.44 & 80.11 \\
    VL-SAT~\cite{wang2023vl}           & 96.31 & 99.21 & 80.03 & 93.64 & 52.38 & 66.13 & 31.28 & 47.26 & 75.09 & 82.25 \\
    \midrule
    Ours                               & 96.25 & 99.13 & \textbf{84.92} & \textbf{95.34} & \textbf{56.66} & \textbf{70.33} & \textbf{32.24} & \textbf{48.25} & \textbf{79.70} & \textbf{86.18} \\
    \bottomrule
\end{tabular}
\caption{Quantitative results (\%) of Top-K mean Recall (mR). For each split (Head, Body, Tail) we report
\emph{Predicate mR@3} and \emph{Predicate mR@5}.  Triplet Recall is given for \emph{Unseen} and \emph{Seen} relations at two cut-offs.}
\label{tab:obj_pred_ma3_triplet}
\end{table*}

\begin{table*}[!h]
  \centering
  \small
  \setlength{\tabcolsep}{3pt}
  \renewcommand{\arraystretch}{1.15}
  \begin{tabular}{
      l
      ccc  
      ccc   
    }
    \toprule
    \multirow{2}{*}{\textbf{Model}} &
      \multicolumn{3}{c}{\textbf{SGCls}} &
      \multicolumn{3}{c}{\textbf{PredCls}} \\
    \cmidrule(lr){2-4} \cmidrule(lr){5-7}

    & mR@20 & mR@50 & mR@100 & mR@20 & mR@50 & mR@100 \\ 
    \midrule
    KERN~\cite{chen2019knowledge}                  & 9.5 & 11.5 & 11.9 & 18.8 & 25.6 & 26.5 \\
    SGPN~\cite{wald2020learning}                   & 19.7 & 22.6 & 23.1 & 32.1 & 38.4 & 38.9 \\
    SGFN~\cite{wu2021scenegraphfusion}             & 20.5 & 23.1 & 23.1 & 46.1 & 54.8 & 55.1 \\
    Zhang \emph{et al.} ~\cite{zhang2021knowledge} & 24.4 & 28.6 & 28.8 & 56.6 & 63.5 & 63.8 \\
    VL-SAT~\cite{wang2023vl}                       & 31.0 & 32.6 & 32.7 & 57.8 & 64.2 & 64.3 \\
    \midrule
    \textbf{Ours}                                  & \textbf{35.3} & \textbf{37.7} & \textbf{37.9}& \textbf{58.8} & \textbf{66.1} & \textbf{66.4} \\
    \bottomrule
  \end{tabular}
  \caption{Quantitative results (\%) of Top-K mean Recall (mR) on SGCls and PredCls with graph constraints. The \textbf{bold} denotes the best performance.}
  \label{tab:sgcls_predcls_gc}
\end{table*}

\begin{table*}[!h]
  \centering
  \small
  \setlength{\tabcolsep}{5.5pt}  
  \renewcommand{\arraystretch}{1.15}
  \begin{tabular}{l cc cc cc}
    \toprule
    \multirow{2}{*}{Model} &
    \multicolumn{2}{c}{\textbf{Object}} &
    \multicolumn{2}{c}{\textbf{Predicate}} &
    \multicolumn{2}{c}{\textbf{Triplet}} \\
    \cmidrule(lr){2-3}\cmidrule(lr){4-5}\cmidrule(lr){6-7}
    & mR@1 & mR@5 & mR@1 & mR@3 & mR@50 & mR@100 \\
    \midrule
    SGPN \cite{wald2020learning}         & 10.86 & 27.54 & 32.01 & 55.22 & 41.52 & 51.92 \\
    SGFN \cite{wu2021scenegraphfusion}         & 21.24 & 46.68 & 41.89 & 70.82 & 58.37 & 67.61 \\
    VL-SAT \cite{wang2023vl}       & 21.41 & 46.14 & 54.03 & \textbf{77.67} & 65.09 & 73.59 \\
    \midrule
    Ours         & \textbf{22.55} & \textbf{48.10} & \textbf{56.32} & 76.26 & \textbf{65.31} & \textbf{74.54} \\
    \bottomrule
  \end{tabular}
  \caption{Quantitative results (\%) of Top-K mean Recall (mR). The \textbf{bold} denotes the best performance.}
  \label{tab:table11}
\end{table*}

\begin{table*}[ht]
  \centering
  \small
  \setlength{\tabcolsep}{2.3pt}
  \renewcommand{\arraystretch}{1.1}
  \begin{tabular}{l cc cc cc cc cc}
    \toprule
    \multirow{2}{*}{\textbf{Model}} &
    \multicolumn{2}{c}{\textbf{Object}} &
    \multicolumn{2}{c}{\textbf{Predicate}} &
    \multicolumn{2}{c}{\textbf{Triplet}} &
    \multicolumn{2}{c}{\textbf{SGCls}} &
    \multicolumn{2}{c}{\textbf{PredCls}} \\
    \cmidrule(lr){2-3}\cmidrule(lr){4-5}\cmidrule(lr){6-7}\cmidrule(lr){8-9}\cmidrule(lr){10-11}
    & R@1 & mR@1 & R@1 & mR@1 & R@50 & mR@50 & R@50 & mR@50 & R@50 & mR@50 \\
    \midrule
    w/o gating & 59.34 & \textbf{23.43} & 91.13 & 55.41 & 91.20 & 64.40 & 45.4 & 43.2 & 87.5 & 73.3 \\
    w/ gating (ours) & \textbf{59.53} & 22.56 & \textbf{91.27} & \textbf{56.32} & \textbf{91.40} & \textbf{65.31} & \textbf{46.1} & \textbf{44.5} & \textbf{87.7} & \textbf{74.7} \\
    \bottomrule
  \end{tabular}
  \caption{\textbf{Ablation studies on Bidirectional Gating.} “w/o gating” refers to the model without gating on reverse edges in the bidirectional message passing, while “w/ gating (ours)” is our final model with gating applied to reverse edges. SGCls and PredCls are evaluated without graph constraints.}
  \label{tab:abl_gating}
  \vspace{-0.1cm}
\end{table*}

\begin{table*}[!h]
  \centering
  \small
  \setlength{\tabcolsep}{3.0pt}
  \renewcommand{\arraystretch}{1.1}
  \begin{tabular}{ccc cc cc cc cc cc}
    \toprule
    \multicolumn{3}{c}{\textbf{Inputs}} &
    \multicolumn{2}{c}{\textbf{Object}} &
    \multicolumn{2}{c}{\textbf{Predicate}} &
    \multicolumn{2}{c}{\textbf{Triplet}} &
    \multicolumn{2}{c}{\textbf{SGCls}} &
    \multicolumn{2}{c}{\textbf{PredCls}} \\
    \cmidrule(lr){1-3}\cmidrule(lr){4-5}\cmidrule(lr){6-7}\cmidrule(lr){8-9}\cmidrule(lr){10-11}\cmidrule(lr){12-13}
    geo. & obj. & sub. & R@1 & R@5 & R@1 & R@3 & R@50 & R@100 & R@20 & R@50 & R@20 & R@50 \\
    \midrule
    \checkmark &  &  & 59.09 & 80.35 & 90.53 & 98.45 & 90.91 & 93.38 & 34.6 & 36.2 & 69.9 & 81.3 \\
    \checkmark & \checkmark & \checkmark & \textbf{59.53} & \textbf{81.20} & \textbf{91.27} & \textbf{98.48} & \textbf{91.40} & \textbf{93.80} & \textbf{36.1} & \textbf{37.7} & \textbf{70.2} & \textbf{82.0} \\
    \bottomrule
  \end{tabular}
  \caption{\textbf{Ablation studies on input of relationship encoder.} Checkmarks indicate whether each feature (object, subject, geometry) is used or not. SGCls and PredCls are evaluated with graph constraints.}
  \label{tab:abl_rel_input}
  \vspace{-0.1cm}
\end{table*}

\textbf{Impact of removing positive term in denominator.} As shown in Table \ref{tab:obj_class_cols}, the proposed loss achieves the highest Top-1 accuracy when compared to both loss functions of Eq.~\ref{eq:loss_pos_deno} and Eq.~\ref{eq:convention_ce}. In contrast, mean accuracy (mA@$k$) is higher for the loss variant with the positive term retained, which aligns with the theoretical analysis presented in Appendix \S~\ref{obs:loss_details}. These empirical results are consistent with the analytical justification for omitting the positive term, as this approach enhances object-level discriminability, which is fundamental to the central hypothesis. The loss formulation of Eq.~\ref{eq:convention_ce} exhibits lower performance on both accuracy metrics, corresponding to the limitations identified in the theoretical analysis. The combined theoretical and empirical observations provide evidence supporting the effectiveness of the proposed loss formulation and the underlying assumption.

\textbf{Ablation studies on object encoder.} Table~\ref{tab:obj_cls_ablation} shows that substituting VL‑SAT’s point encoder with our multimodal object encoder yields a substantial boost in classification performance: Top‑1/5/10 accuracy improved by approximately 8--20\%, and the corresponding mean‑accuracy figures nearly doubled. The ablation rows confirm that each component of the encoder contributes positively. Removing the visual branch decreases Top‑1 accuracy by 1.65\%, while discarding the affine‑regularization term causes a larger 4.0\% drop, indicating that redundancy reduction stabilizes instance‑level cues. Eliminating the text branch has the most dramatic effect: Top‑1 accuracy drops significantly to 16.10\% and mean accuracy to 1.05\%, highlighting the importance of language supervision for disambiguating geometrically similar classes. These results corroborate our claim that jointly leveraging visual, textual, and geometric signals produces sharper object posteriors, which in turn drive the gains observed in downstream scene‑graph metrics.

\textbf{Detailed ablation studies on proposed methods.} Table~\ref{tab:table8} presents a comprehensive ablation study examining the performance impact of different architectural components, excluding the object feature encoder. All possible component combinations were systematically evaluated to identify their individual and combined contributions.

The Local Spatial Enhancement (LSE) component demonstrates complex performance interactions across different configurations. When LSE is used in isolation, performance metrics generally decrease except for PredCls and \emph{recall} metrics for predicates and triplets. However, when combined with Global Spatial Enhancement (GSE), LSE contributes to substantial performance improvements compared to using GSE alone. Conversely, the combination of LSE with Bidirectional Edge Gating (BEG) results in performance degradation. These observations suggest that GSE and LSE operate synergistically to improve performance across most metrics. From an analytical perspective, this pattern indicates that excessive emphasis on local information without global contextual understanding may lead to error propagation throughout the prediction pipeline, with BEG potentially amplifying this effect when global context is absent.

In contrast, GSE exhibits consistent performance improvements across all component combinations. Particularly when paired with BEG, GSE facilitates significant enhancement in \emph{mean recall} metrics, with improvements of 3--5\%. This suggests that incorporating global geometric information of objects within a scene plays a critical role in overall model performance.

BEG demonstrates consistent performance gains of 2--4\% in \emph{mean recall} metrics across most configurations, with the exception of its isolated combination with LSE. These improvements become particularly pronounced when BEG is implemented alongside GSE. This effectiveness can be attributed to BEG's capacity to simultaneously process bidirectional edge information, thereby enhancing the model's ability to encode and utilize local scene relationships.

The empirical results demonstrate that the optimal configuration incorporates all three components—GSE, LSE, and BEG—resulting in superior performance across evaluation metrics. This finding underscores the importance not only of extracting high-quality object features but also of effectively leveraging these features through complementary architectural components for scene graph generation.

\textbf{Performance on additional metrics.} Table ~\ref{tab:obj_pred_ma3_triplet} presents the quantitative evaluation results, reporting mR@$K$ for Head, Body, and Tail predicates ($K= 3,5$), along with R@$K$ for seen and unseen triplets ($K=50, 100$). The proposed method demonstrates consistent performance improvements across the majority of metrics. Notably, mR@$K$ for Tail predicates—typically the most challenging category due to data sparsity—shows an improvement of more than 4\% over the previous highest performing method. Additionally, R@$K$ on \emph{unseen} triplets increases by approximately one percentage point. These improvements, achieved despite the pronounced long‐tailed class distribution characteristic of the 3DSSG dataset \cite{wald2020learning}, indicate enhanced generalization capabilities of the proposed approach compared to existing methods.

Table~\ref{tab:sgcls_predcls_gc} presents additional comparative results against all publicly available baselines on SGCls and PredCls tasks, reporting mR@$K$ for $K=50, 100$. The experimental results demonstrate consistent improvements, with performance gains of 1–5\% across all evaluation settings. The most significant improvement is observed in the SGCls \emph{mean recall} metric, where the proposed method achieves a 4--5\% increase relative to the previous best performing approach. These consistent improvements across multiple metrics provide empirical evidence for the effectiveness of the proposed object-centric approach and its beneficial impact on subsequent relational reasoning tasks.

Table \ref{tab:table11} completes the analysis by reporting Top-$K$ \emph{mean recall} for objects, predicates, and full triplets. Because these metrics average over all classes, they are a direct indicator of a model’s ability to cope with the severe long-tailed label distribution of 3DSSG. Our method improves every score by a further 1–2 \%—which represents significant progress on this benchmark. Except for predicate mR@3, the gains are uniform across object, predicate, and triplet levels, suggesting that the discriminative object features learned by our encoder translate into more reliable relational reasoning. Taken together with the results in Tables \ref{tab:obj_pred_ma3_triplet} and \ref{tab:sgcls_predcls_gc}, these findings demonstrate the generalization capabilities of the proposed approach relative to existing methods.

\textbf{Impact of gating in BEG.} Table~\ref{tab:abl_gating} shows that applying our gating mechanism to reverse edges slightly improves or maintains all R@$K$ metrics, while consistently increasing nearly all mR@$K$ scores by 0.8–1.5\%. Bidirectional message passing without controlled modulation may oversaturate node representations with redundant signals; our gating mechanism addresses this by selectively attenuating reverse-edge messages, thereby preserving directionally relevant information. The consistent improvements observed in predicate, triplet, and scene-level mR@$K$ metrics confirm that this modulation particularly benefits long-tail relations, yielding more balanced performance without sacrificing overall accuracy.

\textbf{Impact of semantic cues in relationship encoding.}
Unlike previous approaches that relied solely on geometric information for relationship modeling, our relationship encoder integrates rich semantic features from our object encoder alongside geometric metadata. Table~\ref{tab:abl_rel_input} shows that incorporating object-level semantics leads to a consistent improvement across all evaluation metrics. In particular, Predicate R@1 improves from 90.53 to 91.27, highlighting that semantic cues from object identity help the model better resolve predicate ambiguities. This result supports our initial hypothesis that combining geometric and semantic information enables more accurate and context-aware relationship reasoning, especially in cases where object semantics strongly constrain the set of plausible predicates.

\begin{table*}[!h]
    \centering
    \setlength{\tabcolsep}{3pt}
    \begin{tabular}{lcccccccc}
    \toprule
    \multirow{2}{*}{Method} & \multicolumn{4}{c}{SGCls} & \multicolumn{4}{c}{PredCls} \\
    \cmidrule(lr){2-5}\cmidrule(lr){6-9}
    & R@20 & mR@20 & R@50 & mR@50 & R@20 & mR@20 & R@50 & mR@50 \\
    \midrule
    3D-VLAP \cite{wang2024weakly}  & 18.7 & 11.3 & 21.8 & 13.9 & 53.5 & 31.8 & 64.4 & 41.2 \\
    Ours w/ weakly-sup.            & 25.5 & 18.1 & 28.2 & 21.2 & 61.5 & 38.8 & 69.7 & 45.4 \\
    \textbf{Ours} w/ fully-sup.    & \textbf{36.1} & \textbf{35.3} & \textbf{37.7} & \textbf{37.7} & \textbf{70.2} & \textbf{58.8} & \textbf{82.0} & \textbf{66.1} \\
    \bottomrule
    \end{tabular}
    \caption{\textbf{Extensibility to weakly-supervised settings.} “w/ weakly-sup.” denotes the weakly supervised variant of our method and “w/ fully-sup.” denotes our original setting(fully-supervised manner). Under weak supervision, our approach outperforms 3D-VLAP~\cite{wang2024weakly} on both R@K and mR@K. However, it does not match our fully supervised model, likely due to the inherent noise in weakly supervised signals arising from imprecise triplet labels.}
    \label{tab:weakly-adopted}
\end{table*}

\textbf{Extensibility to weakly-supervised setting.} As mentioned in Section.~\ref{rel:relworks}, weakly-supervised approach also seems feasible for 3D scene graph generation task. Following 3D-VLAP \cite{wang2024weakly}, we adopted weakly-supervised setting to our approach and check whether our hypothesis also works with other training scheme. In Table.~\ref{tab:weakly-adopted}, proposed method also performs well with weakly-supervised setting, overwhelming original 3D-VLAP with huge margin. This result provides us useful insights that our hypothesis also valid in other training schemes. Especially, the mR@K showed huge gap compared to 3D-VLAP, which shows promising aspects to address long-tailed distribution problem with reducing extensive human annotation labor. However, It could not exceeded fully-supervised settings suggested in manuscript. We assume that weakly-supervised siginals(pseudo-labels selected by CLIP) made errors, causing somewhat tricky to learn proper insights.

\subsection{Qualitative Results} 
\label{exp:qualitative_add}

\begin{figure*}[ht]
  \centering
  \begin{subfigure}[b]{0.45\linewidth}
    \centering
    \includegraphics[width=\linewidth]{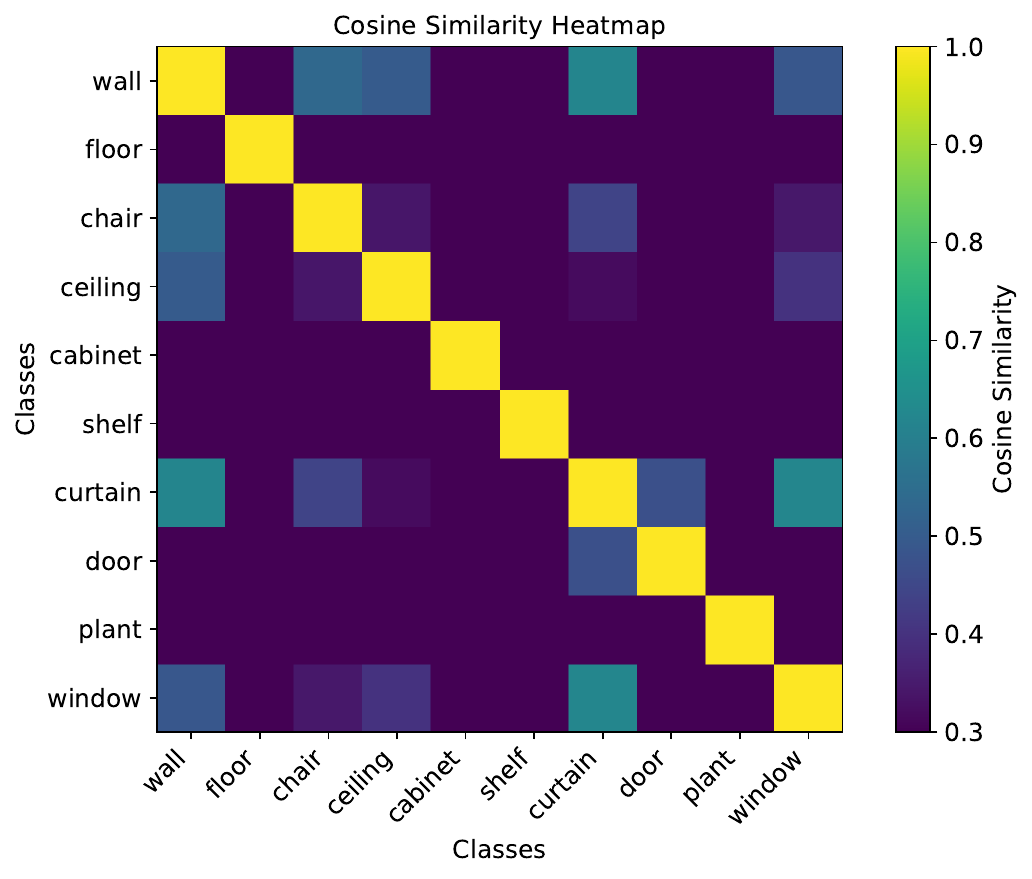}
    \caption{}
    \label{fig:heatmap_proposed}
  \end{subfigure}%
  \hfill
  \begin{subfigure}[b]{0.45\linewidth}
    \centering
    \includegraphics[width=\linewidth]{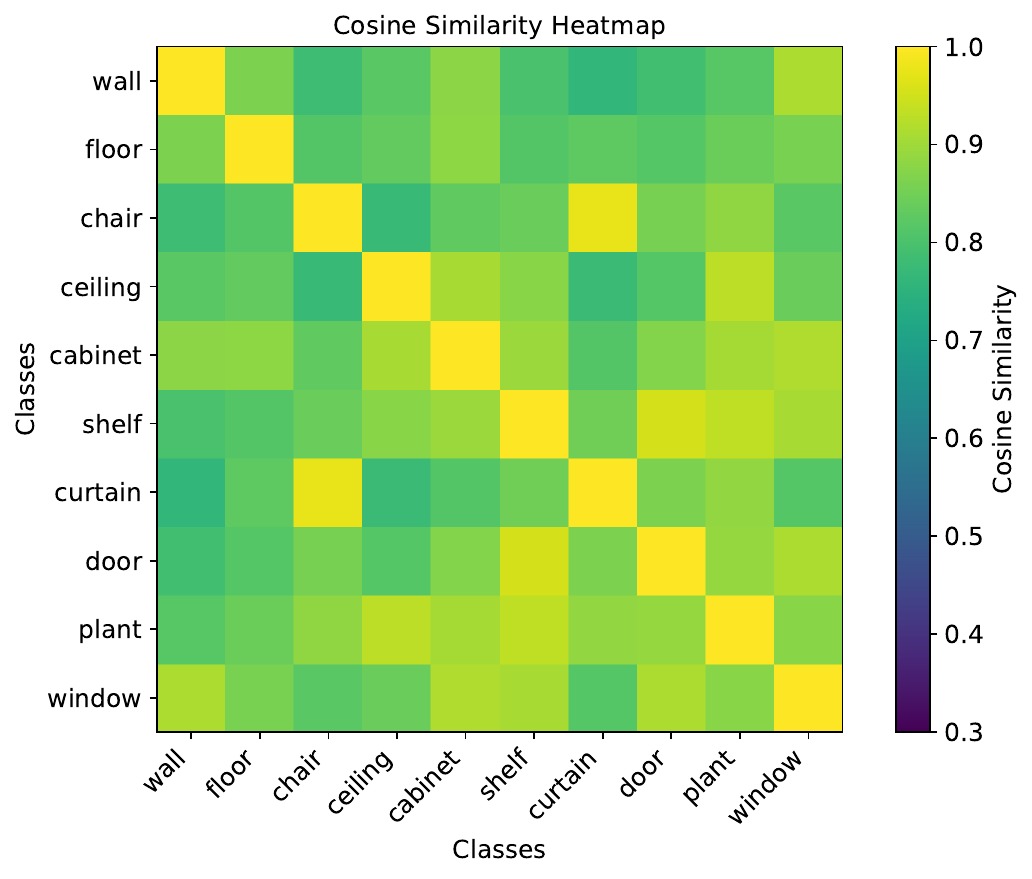}
    \caption{}
    \label{fig:heatmap_vlsat}
  \end{subfigure}
  \caption{Qualitative analysis of object feature space. (a) Class‑level cosine similarity of Ours, (b) is for same as that of VL-SAT \cite{wang2023vl}.}
  \label{fig:qualitative_embeddings_cos}
\end{figure*}

\textbf{Cosine heatmap visualization of object features.} Figure~\ref{fig:qualitative_embeddings_cos} visualizes the class‑level cosine similarity matrices of object embeddings learned by our encoder (a) and the VL‑SAT baseline (b) for ten representative categories. In our matrix, diagonal elements are nearly saturated ($\geq$ 0.95), while off‑diagonal cells quickly drop below 0.35, producing a clear block‑diagonal pattern that indicates high intra‑class cohesion and strong inter‑class separation. Particularly confusable pairs such as \emph{cabinet–shelf} and \emph{wall–ceiling} remain well below 0.40, suggesting the encoder captures fine‑grained geometric cues. By contrast, the VL‑SAT matrix is notably diffuse: most off‑diagonal similarities range from 0.6–0.8, and visually similar classes (\emph{cabinet}, \emph{shelf}, \emph{door}) share nearly identical colors with diagonal elements. The sharper contrast in our embedding space supports our hypothesis that a more discriminative object feature space directly translates into higher predicate and triplet accuracy.

\begin{figure}[!h]
    \centering
    \includegraphics[width=1\linewidth]{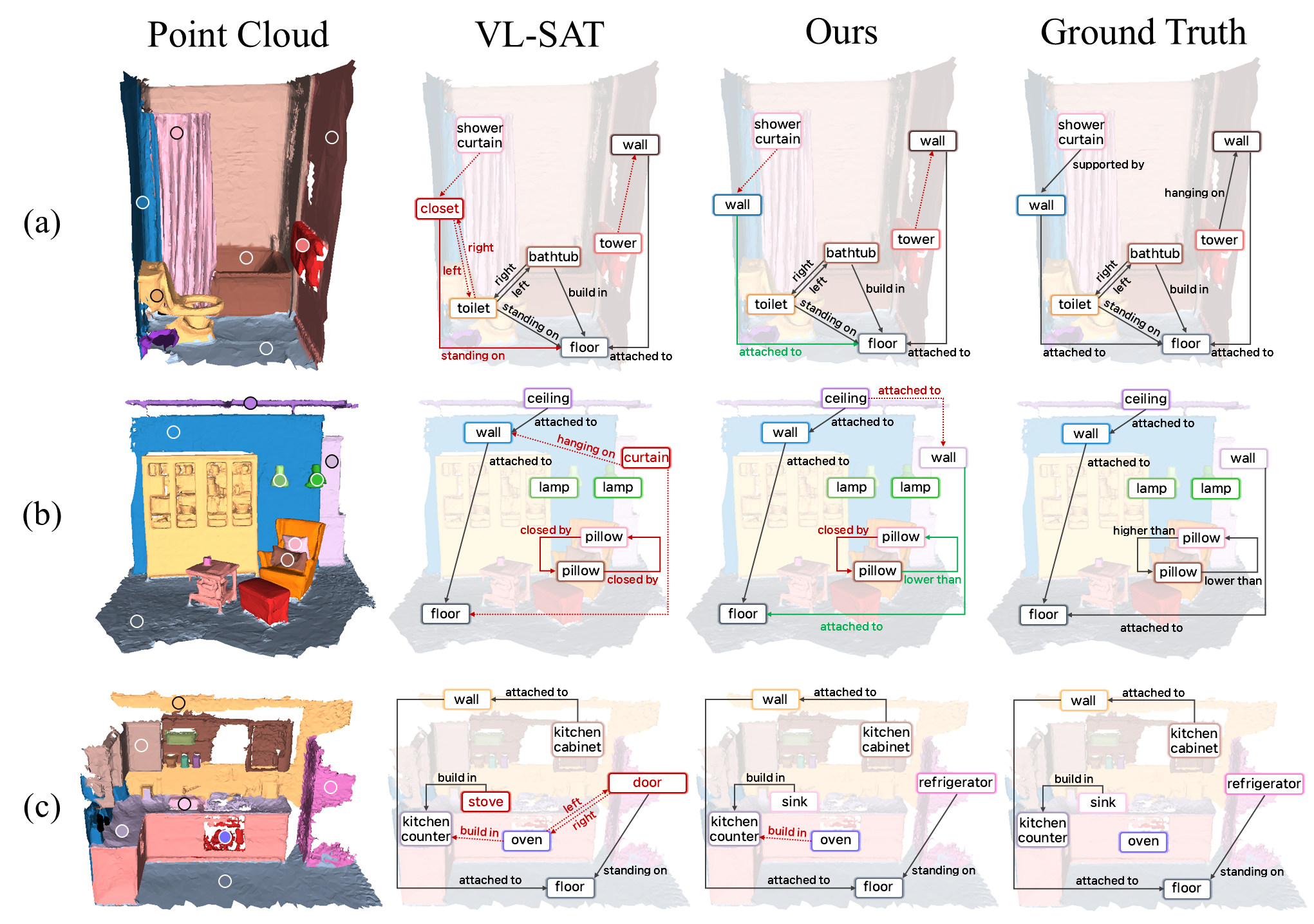}
    \caption{\textbf{3D scene graph visualizations.} \greenarrow\ indicates true positive relations that are correctly predicted. \redarrow\ denotes false positives, where the model predicts an incorrect predicate for an existing relation. \reddashedarrow\ represents either false negatives—missed ground-truth relations—or hallucinated relations that do not exist in the ground truth.}
    \label{fig:graph_viz}
\end{figure}

\textbf{Additional visualizations of 3D scene graph. } We provide additional predicted scene graphs from our proposed method in Fig.~\ref{fig:graph_viz}. As hypothesized, object misclassification significantly impacts predicate prediction accuracy within the scene graph. Scene (a) illustrates how information propagated through the robust object encoder effectively filters out non-existent relationships. VL-SAT incorrectly classifies a wall as a closet, consequently leading to erroneous relationship predictions involving the toilet and floor. This misclassification results in VL-SAT inferring relationships that are absent in the ground truth data, highlighting the importance of accurate object classification for downstream relationship prediction. In scene (b), VL-SAT exhibits a classification error by identifying a wall as a curtain. While these objects share similar geometric structures and only differ substantially in scale, accurate classification requires precise consideration of dimensional attributes. The robustness of the proposed object encoder enables clear distinction between these visually similar objects, resulting in reduced predicate error rates compared to VL-SAT. In scene (c), the proposed object encoder demonstrates notable robustness in classification performance. Objects that were misclassified by VL-SAT, specifically the refrigerator and sink, are correctly identified by the proposed approach. This accurate classification enables the model to more successfully reason about relationships between the refrigerator and oven compared to VL-SAT. Furthermore, the correct differentiation between stove and sink—objects with potentially confounding morphological similarities—provides additional evidence for the robustness of the proposed object encoder.

\section{Limitations and Further Works} 
\label{limitations}

The present study strictly follows the evaluation protocol established in 3DSSG and consequently does not incorporate 3D object detection capabilities. As a result, the proposed approach cannot be directly applied to real-world settings where 3D object detection must be performed. Furthermore, the method requires the entire scene to be available for scene graph generation, as it does not support incremental graph updates—a limitation for practical deployment in real-world scenarios.

For future work, we aim to develop an integrated framework that combines 3D object detection with an incremental scene graph generation module. This study has demonstrated that object representations play a crucial role in relationship reasoning. This finding suggests that leveraging existing high-performance object representation methods could significantly enhance overall scene graph generation performance. The established connection between object representation quality and relationship prediction accuracy also provides valuable insights for improving model performance when integrated with 3D object detection systems. Our subsequent research will explore effective methods for combining these components to develop more practical off-the-shelf scene graph generation algorithms.

\end{document}